\documentclass{article} % For LaTeX2e
\usepackage[preprint]{colm2026_conference} % preprint = author shown, no line numbers (arXiv style)

\usepackage[utf8]{inputenc} % allow utf-8 input
\usepackage[T1]{fontenc}    % use 8-bit T1 fonts

\usepackage{amsmath}
\usepackage{amssymb}
\usepackage{amsfonts} 
\usepackage{bm}
\usepackage{mathtools}
\usepackage{amsthm} 
\usepackage{nicefrac}        % compact symbols for 1/2, etc.

\usepackage{booktabs}       % professional-quality tables
\usepackage{array}
\usepackage{graphicx}
\usepackage{subcaption}
\usepackage[table]{xcolor}  % row colors (\rowcolor)
\usepackage{multirow}
\usepackage{placeins}
\usepackage{wrapfig}

\usepackage{enumitem}
\usepackage{microtype}
\usepackage{ragged2e}

\usepackage{algorithm}
\usepackage{algorithmic}

\usepackage[most]{tcolorbox}
\tcbuselibrary{listings}

\usepackage{hyperref}
\usepackage{url}

\usepackage[capitalize,noabbrev]{cleveref}

\usepackage{tikz}
\usetikzlibrary{arrows.meta, positioning, calc, backgrounds, shapes.multipart, fit}

\usepackage{arxiv_style}

\graphicspath{{../}{./}}

\definecolor{darkblue}{HTML}{254E7A} % palette navy (Color 01) -- links
\definecolor{linkpink}{HTML}{D6336C} % deep pink -- URLs
\hypersetup{colorlinks=true, linkcolor=darkblue, urlcolor=linkpink, citecolor=darkblue}
\setlist[itemize]{leftmargin=1.2em, topsep=2pt, itemsep=2pt}
\setlist[enumerate]{leftmargin=1.4em, topsep=2pt, itemsep=2pt}

\definecolor{paletteNavy}{HTML}{254E7A}   % Color 01 -- darkest
\definecolor{paletteBlue}{HTML}{5280AC}   % Color 02 -- mid
\definecolor{paletteSky}{HTML}{7EBCE1}    % Color 03 -- light
\definecolor{palettePale}{HTML}{C5E0EE}   % Color 04 -- very light
\definecolor{paletteCream}{HTML}{F3F2EA}  % Color 05 -- near-white warm

\definecolor{midnightgreen}{rgb}{0.0, 0.29, 0.33}
\definecolor{deepgreen}{HTML}{055c29}
\definecolor{deeppurple}{HTML}{7030a0}
\definecolor{deepblue}{HTML}{171d91}
\definecolor{brown}{HTML}{6B5226}  % appendix umber for highlighted instruction text
\definecolor{shadered}{HTML}{ffe5e5}
\definecolor{shadegreen}{HTML}{e5f7ed}
\definecolor{msftBlack}{RGB}{0,0,0}
\definecolor{lightred}{RGB}{255,163,163}
\definecolor{deepred}{RGB}{153,0,0}
\definecolor{semInputFill}{HTML}{DCEFD8}\definecolor{semInputEdge}{HTML}{8FBE8B}      % green
\definecolor{semRespFill}{HTML}{FFE7BF}\definecolor{semRespEdge}{HTML}{E2A64B}        % amber
\definecolor{semCorrectFill}{HTML}{E8DAF1}\definecolor{semCorrectEdge}{HTML}{B58BC7}  % lavender
\definecolor{semPolicyFill}{HTML}{FFF3C7}\definecolor{semPolicyEdge}{HTML}{D8B23E}    % yellow
\definecolor{semVerifFill}{HTML}{DDEAF7}\definecolor{semVerifEdge}{HTML}{8FB6D9}      % blue
\definecolor{semTrainFill}{HTML}{F8DAD8}\definecolor{semTrainEdge}{HTML}{D97A73}      % salmon
\definecolor{semArrow}{HTML}{6F95C8}\definecolor{semArrowLight}{HTML}{AFC7E8}         % main / branch arrows
\definecolor{semStageBg}{HTML}{F7F8FA}\definecolor{semStageEdge}{HTML}{B8B8B8}        % dashed stage container
\definecolor{semTitle}{HTML}{1F2933}     % section title / label text
\definecolor{semNavy}{HTML}{254C78}      % strong takeaway-strip accent

\definecolor{barblue}{HTML}{8FB6D9}        % policy node  (muted pale blue)
\definecolor{barorange}{HTML}{D97A73}      % phi / tts kernel (salmon -- the active part)
\definecolor{bargreen}{HTML}{E2A64B}       % reward node  (amber)

\lstdefinestyle{pythonstyle}{
  language=Python,
  basicstyle=\ttfamily\small,
  keywordstyle=\color{blue},
  commentstyle=\color{gray},
  stringstyle=\color{orange},
  showstringspaces=false,
  breaklines=true
}

\newsavebox{\fitbox}
\newcommand{\fitwidth}[1]{%
  \savebox{\fitbox}{#1}%
  \ifdim\wd\fitbox>\linewidth
    \resizebox{\linewidth}{!}{\usebox{\fitbox}}%
  \else
    \usebox{\fitbox}%
  \fi
}

\theoremstyle{plain}
\newtheorem{theorem}{Theorem}[section]

\newtheorem{lemma}[theorem]{Lemma}

\theoremstyle{definition}

\theoremstyle{remark}

\newcommand{\SR}{{\phi_{\mathrm{SR}}}}
\newcommand{\PhiR}{\Phi_{\mathcal{R}}}

\newcommand{\Rev}{\mathcal{R}}

\newcommand{\JSR}{J_{\SR}}
\newcommand{\JOneShot}{J_{\mathrm{OneShot}}}

\newcommand{\E}{\mathbb{E}}
\newcommand{\Prob}{\mathbb{P}}

\newcommand{\Xcal}{\mathcal{X}}
\newcommand{\Ycal}{\mathcal{Y}}
\newcommand{\rstar}{r^{\star}}

\newcommand{\Yuanxin}[1]{}
\newcommand{\mgh}[1]{}
\newcommand{\rz}[1]{}
\newcommand{\mh}[1]{}
\floatname{algorithm}{Protocol}

\title{REVES: REvision and VErification--Augmented Training for Test-Time Scaling}
\author{
{\bf Yuanxin Liu$^{1}$\thanks{Work done during an internship at Amazon.}\enspace Ruida Zhou$^{2}$\enspace Xinyan Zhao$^{2}$\enspace Amr Sharaf$^{2}$\enspace Hongzhou Lin$^{2}$} \\
{\bf Arijit Biswas$^{2}$\enspace Mohammad Ghavamzadeh$^{3}$\enspace Zhaoran Wang$^{1}$\enspace Mingyi~Hong$^{2,4}$} \\[5pt]
{\normalfont\normalsize $^{1}$Northwestern University \quad $^{2}$Amazon AGI \quad $^{3}$Qualcomm AI Research \quad $^{4}$University of Minnesota}
}

\begin{document}
\maketitle
\lhead{} % remove the "Preprint. Under review." banner; delete this line to keep it
\renewcommand{\headrulewidth}{0pt} % remove the rule above the header

\begin{abstract}
Test-time scaling via sequential revision has emerged as a powerful paradigm for enhancing Large Language Model (LLM) reasoning. However, standard post-training methods primarily optimize single-shot objectives, creating a fundamental misalignment with multi-step inference dynamics. While recent work treats this as multi-turn reinforcement learning (RL), conventional approaches optimize over the multi-step trajectories directly, failing to further exploit the high-quality mistakes in intermediate steps that model can learn from correcting them. We propose a two-stage iterative framework that alternates between online data/prompt augmentation and policy optimization. By converting the intermediate steps (``near-miss'' answers) in the successful recovery trajectories into decoupled revision and verification prompts, our approach concentrates training on both effective answer transformation and error identification. This approach enables efficient off-policy data generation and reduces the computational overhead of long-horizon sampling compared to standard multi-turn RL. On LiveCodeBench, using publicly available test cases as feedback, we observe gains of +6.5 points over the RL baseline and +4.0 points over standard multi-turn training. Beyond coding, our approach matches the previously reported SOTA result on circle packing while using the smallest base model (4B) and far fewer rollouts than the much larger evolutionary search systems. Math results under ground-truth verification further confirm improved correction ability. It also generalizes to out-of-distribution constraint-satisfaction puzzles such as n\_queens and mini\_sudoku, where correctness is defined entirely by problem constraints. Code is available at \url{https://github.com/yxliu02/REVES.git}.
\end{abstract}

% !TEX root = main.tex
\section{Introduction}

Large language models (LLMs) deployed in challenging settings rarely produce a correct answer on the first attempt; real-world workflows routinely loop back with feedback. A growing family of test-time scaling (TTS) algorithms formalizes this pattern, including sequential revision \citep{madaan2023selfrefineiterativerefinementselffeedback,shinn2023reflexionlanguageagentsverbal}, tree search \citep{inoue2025widerdeeperscalingllm}, evolutionary refinement \citep{lee2025evolvingdeeperllmthinking}, and best-of-$N$ with a verifier \citep{chow2025inferenceawarefinetuningbestofnsampling}. They all share a single primitive: given a problem, a previous attempt, and feedback, produce an improved response.

This raises a basic question: \emph{can we design a post-training algorithm that explicitly improves a model's test-time sequential revision capability?} We focus on sequential revision (SR) because gains on SR transfer to the rest of the family: every revision-using TTS algorithm eventually calls the policy on revise-from-prior-attempt inputs. SR is also a strong TTS algorithm in its own right, matching or surpassing tree search and evolutionary refinement on planning tasks (Figure~\ref{fig:tts_landscape}). Theorem~\ref{thm:whysrinformal} shows that increasing the policy's one-step revision value on SR-induced inputs also raises $J_\phi$ for every revision-using TTS algorithm that covers SR's revision inputs, up to small drift terms.

How, then, do we post-train an LLM to optimize the SR objective $\JSR$? Standard post-training, RLHF~\citep{ouyang2022traininglanguagemodelsfollow}, RLVR, and GRPO~\citep{deepseekai2025deepseekr1incentivizingreasoningcapability}, optimizes a single-shot expected reward, which is fundamentally misaligned with the multi-step nature of test-time deployment. The natural fix is multi-turn RL on revision rollouts, but trajectory-level credit broadcasts assign path-dependent credit. Take a wrong, wrong, correct rollout: every per-turn log-probability gradient receives the same positive credit, including the wrong intermediates. Per rollout this credit is biased, and only cancels in expectation across many rollouts.

We propose REVES, a two-stage iterative framework that targets the SR objective $\JSR$ directly through its per-state structure. We show that $\JSR$ decomposes exactly into a weighted sum of per-state one-step recovery probabilities along visited rollouts (Lemma~\ref{lem:hazard}), exposing single-state, single-turn gradients with no horizon credit assignment. REVES is an offline realization of this per-state training signal: each epoch, it runs SR rollouts under the current policy, retains those that succeed within budget, converts the intermediate (``near-miss'') answers into decoupled \emph{revision} and \emph{verification} prompts, and trains with standard single-turn RL on the augmented prompt set; the next epoch refreshes the augmentation. This concentrates training on per-state recovery and avoids both the bias of trajectory broadcasts and the cost of long-horizon online sampling.

Empirically, on LiveCodeBench with public test cases as feedback, REVES achieves $+6.5$ points over the single-shot RL baseline and $+4.0$ points over standard multi-turn training. On the circle packing benchmark, a Qwen3-4B base trained with REVES matches the previously reported best results from much larger evolutionary search systems built on Gemini-2.0 Pro/Flash and Qwen3-8B. Math benchmarks (MATH500, AIME24/25) show consistent improvements under both ground-truth and self-confidence stopping, and the trained policies generalize to out-of-distribution constraint-satisfaction puzzles (n\_queens, mini\_sudoku). Beyond standalone SR, REVES improves every revision-using TTS algorithm we tested (MCTS, AB-MCTS variants, Mind Evolution), in line with the conditional bound in Theorem~\ref{thm:whysrinformal}.

Our contributions are threefold:
\begin{enumerate}[leftmargin=*]
    \item \textbf{Conceptual.} We formulate test-time scaling as a meta-RL problem and show that improving sequential revision also improves any revision-using TTS algorithm that covers SR's revision inputs (Theorem~\ref{thm:whysrinformal}).
    \item \textbf{Methodological.} We exactly decompose $\JSR$ into per-state one-step recovery probabilities (Lemma~\ref{lem:hazard}) and propose REVES, a two-stage framework that recasts intermediate near-miss answers as decoupled revision and verification prompts trained with single-turn RL, avoiding the path-dependent credit assignment of multi-turn RL.
    \item \textbf{Empirical.} REVES delivers consistent test-time gains across math, coding, out-of-distribution puzzles, and circle packing, and improves every revision-using TTS algorithm we evaluated.
\end{enumerate}

% !TEX root = main.tex
\section{Related Work}

\paragraph{Test-time-aware post-training.}
A growing line of work shows the importance of aligning the post-training objective with the intended test-time inference strategy. For parallel-sampling families, this includes optimizing pass@$k$~\citep{chen2025passktrainingadaptivelybalancing, walder2025passkpolicyoptimizationsolving, tang2025optimizinglanguagemodelsinference} when deployment uses best-of-$N$, majority voting~\citep{wang2023selfconsistencyimproveschainthought}, or learned verifiers~\citep{chow2025inferenceawarefinetuningbestofnsampling}. Our deployment is structurally different: the test-time strategies we care about, including sequential revision, MCTS variants, and Mind Evolution, are all \emph{revision-using} algorithms whose calls take as input a previous attempt together with its feedback. Aligning training with this inference family therefore requires a different objective. 

\paragraph{Training sequential revision capability.}
Closer to our setting, several prior methods explicitly train the policy to revise, including supervised and preference-based approaches~\citep{qu2024recursiveintrospectionteachinglanguage, xiong2025buildingmathagentsmultiturn}, RL-based approaches~\citep{kumar2024traininglanguagemodelsselfcorrect, jiang2025pagmultiturnreinforcedllm, lee2025reviselearningrefinetesttime}, and multi-turn RL methods that incorporate per-turn natural-language or numerical critiques~\citep{jain2025multiturncodegenerationsinglestep, li2025learningreasonfeedbacktesttime, li2025lanpobootstrappinglanguagenumerical, zhang2026critiquegrpoadvancingllmreasoning}. Our work shares the broader goal of strengthening the model's revision capability, but differs in framing: we primarily focus on how to align training with the test-time objective of a revision-using inference strategy.

\paragraph{Weakness-driven data synthesis and guided exploration.}
SwS~\citep{liang2025swsselfawareweaknessdrivenproblem} synthesizes new problems targeting the model's weaknesses using strong external teacher models, and POPE~\citep{qu2026popelearningreasonhard} uses privileged oracle hints to guide on-policy exploration on hard problems. These methods are orthogonal to ours: they target single-shot pass@1, whereas REVES targets a fundamentally different test-time objective.

% !TEX root = main.tex
\section{Test-Time Scaling as a Meta-RL Problem}
\label{sec:meta-rl-formulation}

\paragraph{Setup and two motivating examples.}
We fix notation used throughout the paper. Denote by $\Xcal$ the distribution
over input problems, $\Ycal$ the answer space, and
$r^\star : \mathcal{X} \times \mathcal{Y} \to [0,1]$ the \emph{ground-truth}
reward, where $r^\star(x, y) = 1$ indicating a fully correct answer. %; partial credit within $(0,1)$ is allowed (the binary case $r^\star \in \{0,1\}$ is recovered as a special case). 
We focus on the verifiable-reward regime
where $r^\star$ is computable during both training and at test time, e.g.,
public test-case execution for code, and
constraint satisfaction for puzzles. When $r^\star$ is unavailable at test
time, a surrogate stopping rule replaces it
(Section~\ref{sec:method}). We model the LLM as a policy $\pi_\theta$ in RL parameterized by $\theta$. At deployment, an
LLM rarely operates by a single call to $\pi_\theta$; instead a
\emph{test-time scaling (TTS) algorithm} $\phi$ orchestrates up to $K$
policy calls and extracts a final answer $\hat y \in \Ycal$. Two canonical
examples bracket the design space:
\emph{Best-of-$N$ (BoN)}~\citep{chow2025inferenceawarefinetuningbestofnsampling}
draws $N$ candidates in parallel from $\pi_\theta(\cdot \mid x)$ and selects
$\hat y$ via an external scorer; every call uses the same
original prompt.
\emph{Sequential Revision
(SR)}~\citep{madaan2023selfrefineiterativerefinementselffeedback} generates
an initial response $y_1 \sim \pi_\theta(\cdot \mid x)$ from prompt $x$, and
for $t = 2, \ldots, K$ produces a revision
$y_t \sim \pi_\theta(\cdot \mid x, y_{t-1}, f_{t-1})$ conditioned on the
previous attempt $y_{t-1}$ and a feedback signal $f_{t-1}$ (e.g.,
compiler error messages in code generation), halting at the first correct
response. Detailed pseudocode is in
Protocol~\ref{alg:seqrev_setup} of Appendix~\ref{appendix: setup}.

\paragraph{General formulation: the TTS-induced decision process.}
Pairing any TTS algorithm $\phi$ with $\pi_\theta$ induces a decision process with prompt-state space $\mathcal{S}$ and response space $\Ycal$. At each step $t$, the policy emits a response $y_t \sim \pi_\theta(\cdot \mid s_t)$, and $\phi$ constructs the next prompt via a transition kernel $P_\phi(s_{t+1} \mid s_{1:t}, y_{1:t})$. The initial state is $s_1 = x$. Note that the policy itself remains Markov in the current state (at every step, $\pi_\theta$ conditions only on the local prompt $s_t$); the history dependence is in the kernel, because $\phi$ may branch from any earlier response. We focus on the family
$\PhiR$ of TTS algorithms that issue at least one revision call on a
prior attempt; this includes
Sequential Revision and adaptive
tree search such as AB-MCTS~\citep{inoue2025widerdeeperscalingllm}. For such algorithms a
revision-step prompt has the form
$
s_{t+1} = (x, y_{i_t}, f_{i_t}),\ i_t \in \{1, \ldots, t\},
$
where the prior step $i_t$ is chosen by $\phi$: for SR, $i_t = t$ (always the most recent response), so $s_{t+1}$ depends only on $(s_t, y_t)$ and the induced process is a standard MDP; for tree-search algorithms such as AB-MCTS, $i_t$ may be any earlier response that $\phi$'s search policy decides to branch from, and the kernel is genuinely history-dependent. The procedure runs until a random stopping
time $\tau := K \land \min \{ t : r^\star(x, y_t) = 1 \}$ and outputs $\hat y = y_\tau$; that is, the procedure either \emph{early-stops} once a verified-correct response is produced, or \emph{exhausts the budget} $K$. Running
$\pi_\theta$ inside $\phi$ yields the TTS-induced objective
\[
J_\phi(\theta) = \E_{x \sim \Xcal,\, (s_{1:\tau}, y_{1:\tau}) \sim (\pi_\theta, \phi)}\bigl[r^\star(x, y_\tau)\bigr],
\]
to be contrasted with the standard single-shot objective
\[
\JOneShot(\theta) = \E_{x \sim \Xcal,\, y \sim \pi_\theta(\cdot \mid x)}\bigl[r^\star(x, y)\bigr],
\]
which is what RLHF~\citep{ouyang2022traininglanguagemodelsfollow}, RLVR, and
GRPO~\citep{deepseekai2025deepseekr1incentivizingreasoningcapability}
optimize. Optimizing $J_\phi$ rather than $\JOneShot$ is a \emph{meta-RL}
problem (Figure~\ref{fig:rl-vs-metarl}): the policy must adapt across the
distribution of multi-step contexts that $\phi$ generates, not just answer
the original question. Our goal is to improve $J_\phi$ for every TTS algorithm
$\phi \in \PhiR$.

\begin{figure}[t]
\centering
\resizebox{\textwidth}{!}{%
\begin{tikzpicture}[
  font=\small,
  % --- LEFT (Standard RL): muted / washed-out, to read as "simple, single-shot"
  mstate/.style={                       % gray state
    draw=black!40, rounded corners=2pt, thick,
    minimum width=0.75cm, minimum height=0.65cm, fill=black!6
  },
  mpolicy/.style={                      % pale, desaturated blue policy
    draw=semVerifEdge!75!gray, rounded corners=2pt, thick,
    minimum width=0.9cm, minimum height=0.65cm, fill=semVerifFill!55!white
  },
  mreward/.style={                      % pale, desaturated amber reward
    draw=semRespEdge!75!gray, rounded corners=2pt, thick,
    minimum width=0.8cm, minimum height=0.65cm, fill=semRespFill!55!white
  },
  % --- RIGHT (Meta-RL): vivid. One UNIFIED light-blue chain for every state and
  %     response (so the response is clearly NOT the thing getting complex)...
  chain/.style={                        % response y_t (blue, like the left's y)
    draw=semVerifEdge, rounded corners=2pt, thick,
    minimum width=0.85cm, minimum height=0.65cm, fill=semVerifFill
  },
  cstate/.style={                       % state s_t (grey, like the left's x)
    draw=black!40, rounded corners=2pt, thick,
    minimum width=0.85cm, minimum height=0.65cm, fill=black!6
  },
  % ...and the orchestration kernel phi in SALMON, the one active/complex piece.
  phi/.style={
    draw=semTrainEdge, rounded corners=2pt, very thick,
    minimum width=0.8cm, minimum height=0.65cm, fill=semTrainFill
  },
  reward/.style={                       % amber reward (right side, vivid)
    draw=semRespEdge, rounded corners=2pt, thick,
    minimum width=0.8cm, minimum height=0.65cm, fill=semRespFill
  },
  arr/.style={-{Latex[length=2mm]}, thick, semArrow},
  marr/.style={-{Latex[length=2mm]}, thick, black!45},   % muted arrows on the left
  captxt/.style={font=\footnotesize\itshape, semTitle}
]
\begin{scope}[shift={(0, 0)}]
  \node[captxt] at (1.7, 0.55) {(a) Standard RL: single call};
  \node[mstate]  (rl_x) at (0.2, -0.4) {$x$};
  \node[mpolicy] (rl_y) at (1.6, -0.4) {$y$};
  \node[mreward] (rl_r) at (3.2, -0.4) {$r^\star(x, y)$};
  \draw[marr] (rl_x.east) -- node[above, font=\scriptsize, text=black] {$\pi_\theta$} (rl_y.west);
  \draw[marr] (rl_y.east) -- (rl_r.west);
  \node[font=\scriptsize, align=center, text=black!55] at (1.7, -1.4)
    {one prompt, one response, one reward};
\end{scope}
\draw[gray!50, thick, dashed] (3.96, 0.7) -- (3.96, -2.6);
\begin{scope}[shift={(4.6, 0)}]
  \node[captxt] at (4.7, 0.55) {(b) Meta-RL inside $\phi$: $K$ calls};
  \node[cstate]  (s1)  at (0,    -0.4) {$s_1\!=\!x$};
  \node[chain]   (y1)  at (1.6,  -0.4) {$y_1$};
  \node[cstate]  (s2)  at (2.9,  -0.4) {$s_2$};
  \node[chain]   (y2)  at (4.3,  -0.4) {$y_2$};
  \node (dots) at (5.4, -0.4) {$\cdots$};
  \node[chain]   (yK)  at (6.5,  -0.4) {$\hat y$};
  \node[reward]  (r)   at (8.3,  -0.4) {$r^\star(x, \hat y)$};
  \draw[arr] (s1.east) -- node[above, font=\scriptsize, text=black] {$\pi_\theta$} (y1.west);
  \draw[arr] (y1.east) -- (s2.west);
  \draw[arr] (s2.east) -- node[above, font=\scriptsize, text=black] {$\pi_\theta$} (y2.west);
  \draw[arr] (y2.east) -- (dots.west);
  \draw[arr] (dots.east) -- (yK.west);
  \draw[arr] (yK.east) -- (r.west);
  \node[phi] (phi2) at (2.9, -1.6) {$\phi$};
  \node[phi] (phiK) at (6.5, -1.6) {$\phi$};
  \draw[arr] (phi2.north) -- (s2.south);
  \draw[arr] (phiK.north) -- (yK.south);
  % history feeds into phi (drawn on background layer so they pass behind boxes).
  % phi_2 (early step) gets explicit arrows from its only history (s_1, y_1).
  % phi_K (later step) gets one explicit arrow from y_2 plus a stub arrow
  % representing "earlier history" without connecting to specific boxes.
  \begin{scope}[on background layer]
    \draw[-{Latex[length=2mm]}, thick, dashed, semArrowLight!60!black]
      (s1.south) to[out=270, in=180] (phi2.west);
    \draw[-{Latex[length=2mm]}, thick, dashed, semArrowLight!60!black]
      (y1.south) to[out=270, in=120] (phi2.north west);
    \draw[-{Latex[length=2mm]}, thick, dashed, semArrowLight!60!black]
      (y2.south) to[out=270, in=120] (phiK.north west);
    % "earlier history" stub arrow into phi_K, not anchored to any box
    \draw[-{Latex[length=2mm]}, thick, dashed, semArrowLight!60!black]
      (5.0, -1.6) -- (phiK.west);
  \end{scope}
  \node[font=\scriptsize, align=center, text=black!55] at (4.7, -2.4)
    {$\phi$ uses the full history $(s_{1:t}, y_{1:t})$; same $\pi_\theta$ at every call.};
\end{scope}
\end{tikzpicture}%
}
\caption{\textbf{(a)} Standard RL: policy $\pi_\theta$ maps problem $x$ to response $y$ with reward $\rstar(x, y)$. \textbf{(b)} A TTS algorithm $\phi$ invokes the same $\pi_\theta$ over up to $K$ steps, each prompt $s_t$ built from the prior history (dashed arrows), and extracts a final response $\hat y = y_\tau$ at stopping time $\tau \le K$; $\pi_\theta$ must therefore behave well across the whole context distribution induced by $\phi$.}
\label{fig:rl-vs-metarl}
\end{figure}

This raises an important question: \emph{does there exist a training target $J_{\phi_{\mathrm{train}}}$ whose optimization provably improves $J_\phi$ for every $\phi \in \PhiR$?} If so, we can train against one algorithm and have the gains transfer to every other deployment-time choice in $\PhiR$, side-stepping the cost of running tree search or external verifiers during training. The answer is affirmative. The key fact is that every algorithm in $\PhiR$ at some point invokes $\pi_\theta$ to revise a previous attempt, so raising the revision value on a training target's inputs improves every member of $\PhiR$ that covers those inputs.

\begin{theorem}[Sequential-revision recovery transfers to revision-using TTS]
\label{thm:whysrinformal}
Let $\pi_0$ be a baseline policy and $\pi_1$ an updated policy. For $\phi \in \PhiR$, policy $\pi$, and revision-call input $z$, let
\[
V_\pi(z) := \Prob_{y \sim \pi(\cdot \mid z)}\bigl(\rstar(x, y) = 1\bigr), \qquad
\rho^\phi_\pi(z) := \E_{(\phi, \pi)}\!\Bigl[\textstyle\sum_{t=1}^{\tau} \mathbf{1}\{z_t = z\}\Bigr]
\]
denote the one-step recovery probability and the expected visit count to $z$. Assume:
\begin{enumerate}[label=\textnormal{(C\arabic*)}, leftmargin=2.2em, labelsep=0.4em, labelwidth=1.8em, align=left, noitemsep, topsep=2pt]
    \item \emph{Coverage of SR by $\phi$ at $\pi_0$:} there exists $C_\phi \ge 1$ such that $\rho^\phi_{\pi_0}(z) \ge \rho^\SR_{\pi_0}(z)/C_\phi$ for every $z$.
    \item \emph{Monotone recovery improvement:} $V_{\pi_1}(z) \ge V_{\pi_0}(z)$ for every $z \in \mathrm{supp}(\rho^\phi_{\pi_0})$.
\end{enumerate}
Then, we have the following guarantee:
\[
J_\phi(\pi_1) - J_\phi(\pi_0) \;\ge\; \frac{1}{C_\phi}\bigl[\JSR(\pi_1) - \JSR(\pi_0)\bigr] \;-\; \bigl\|\rho^\phi_{\pi_1} - \rho^\phi_{\pi_0}\bigr\|_1 \;-\; \frac{1}{C_\phi}\bigl\|\rho^\SR_{\pi_1} - \rho^\SR_{\pi_0}\bigr\|_1.
\]
\end{theorem}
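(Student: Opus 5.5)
We plan to use a \emph{hazard-type} representation of $J_\phi$ in terms of visit counts and one-step recovery probabilities. The intended form is
\[
J_\phi(\pi) \;=\; \sum_z \rho^\phi_\pi(z)\, V_\pi(z),
\]
with the same identity for $\JSR$; this is the content of Lemma~\ref{lem:hazard} for SR.

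For general $\phi \in \PhiR$ we argue directly. Since the procedure stops at the first correct response, $r^\star(x,y_\tau)=1$ iff exactly one step $t \le \tau$ succeeds. Hence
\[
\E\bigl[r^\star(x,y_\tau)\bigr] = \E\Bigl[\sum_{t\le\tau}\mathbf{1}\{r^\star(x,y_t)=1\}\Bigr].
\]
The event $\{t\le\tau\}$ is determined by the history before $y_t$ is drawn, and the policy is Markov in $z_t$. Conditioning on $z_t$ therefore gives $\sum_z \rho^\phi_\pi(z)V_\pi(z)$. (With partial rewards the same identity holds with $V$ defined as expected reward.) This works for history-dependent kernels because only the tower property at each $t$ is used.

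Given the identity, decompose
\[
J_\phi(\pi_1)-J_\phi(\pi_0) = \sum_z \rho^\phi_{\pi_0}(z)\bigl[V_{\pi_1}(z)-V_{\pi_0}(z)\bigr] + \sum_z \bigl[\rho^\phi_{\pi_1}(z)-\rho^\phi_{\pi_0}(z)\bigr]V_{\pi_1}(z).
\]
The second term is at least $-\|\rho^\phi_{\pi_1}-\rho^\phi_{\pi_0}\|_1$ since $0\le V\le 1$. In the first term every summand is nonnegative on $\mathrm{supp}(\rho^\phi_{\pi_0})$ by (C2). Combining this with (C1), $\rho^\phi_{\pi_0}\ge \rho^\SR_{\pi_0}/C_\phi$, the first term is at least
\[
\frac{1}{C_\phi}\sum_z \rho^\SR_{\pi_0}(z)\bigl[V_{\pi_1}(z)-V_{\pi_0}(z)\bigr].
\]

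One subtlety is that (C2) is assumed only on $\mathrm{supp}(\rho^\phi_{\pi_0})$. By (C1), however, $\mathrm{supp}(\rho^\SR_{\pi_0})\subseteq\mathrm{supp}(\rho^\phi_{\pi_0})$, so all summands in the SR-weighted sum are nonnegative where it matters. We then apply the same decomposition to SR:
\[
\JSR(\pi_1)-\JSR(\pi_0) = \sum_z \rho^\SR_{\pi_0}(z)\bigl[V_{\pi_1}(z)-V_{\pi_0}(z)\bigr] + \sum_z \bigl[\rho^\SR_{\pi_1}(z)-\rho^\SR_{\pi_0}(z)\bigr]V_{\pi_1}(z).
\]
The last term is at most $\|\rho^\SR_{\pi_1}-\rho^\SR_{\pi_0}\|_1$. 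It follows that
\[
\sum_z \rho^\SR_{\pi_0}(z)\bigl[V_{\pi_1}(z)-V_{\pi_0}(z)\bigr] \ge \JSR(\pi_1)-\JSR(\pi_0)-\|\rho^\SR_{\pi_1}-\rho^\SR_{\pi_0}\|_1.
\]
Chaining these bounds gives exactly the stated inequality.

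The main obstacle is making the visitation identity rigorous for a history-dependent kernel, and ensuring the states $z$ are defined consistently across algorithms. The indicator must count calls whose input is $z$, and the initial prompt $z=x$ must be included consistently for both $\phi$ and SR. We would first state this identity as a general lemma, extending Lemma~\ref{lem:hazard} beyond the MDP case, and then the rest follows by the algebra above.
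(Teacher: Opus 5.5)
Your proposal is correct and follows essentially the same route as the paper. Both arguments use the recovery decomposition $J_\phi(\pi)=\sum_z\rho^\phi_\pi(z)V_\pi(z)$ (via the tower property under first-success stopping), add and subtract $\sum_z\rho^\phi_{\pi_0}(z)V_{\pi_1}(z)$, apply (C1)--(C2) pointwise with the support inclusion $\mathrm{supp}(\rho^{\phi_{\mathrm{SR}}}_{\pi_0})\subseteq\mathrm{supp}(\rho^\phi_{\pi_0})$ (the paper's Case~2), and use the same decomposition for SR to absorb its $\ell_1$ visit shift.

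One minor caveat: your parenthetical claim that the identity extends to partial rewards, with $V$ redefined as expected reward, is false. Partial credit earned at steps $t<\tau$ would be counted in $\sum_{t\le\tau}$ but not in $r^\star(x,y_\tau)$. So, like the paper's proof, the argument requires binary $r^\star$; since that aside is never used, the proof itself is unaffected.
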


See Appendix~\ref{appendix:proof-transfer} for the proof.

\paragraph{Our training target $\JSR$.}
The theorem licenses any $\phi_{\mathrm{train}} \in \PhiR$ as a training target, with the transfer guarantee holding for every deployment-time $\phi$ that covers it. We pick $\SR$ and optimize
\[
\JSR(\theta) := \E_{x \sim \Xcal,\, y_{1:\tau} \sim (\pi_\theta, \SR)}\bigl[\rstar(x, y_\tau)\bigr]
\]
for two reasons. First, $\SR$ is itself a strong TTS algorithm. Figure~\ref{fig:tts_landscape} shows that on TravelPlanner~\citep{xie2024travelplannerbenchmarkrealworldplanning}, $\SR$ matches or surpasses Mind Evolution~\citep{lee2025evolvingdeeperllmthinking} and AB-MCTS~\citep{inoue2025widerdeeperscalingllm} (left), with the Markov-revision form $\pi_\theta(\cdot \mid x, y_{\mathrm{prev}}, f)$ on the best cost-accuracy frontier (right), while being the simplest to implement. Second, optimizing $\SR$ transfers to the rest of $\PhiR$: by Theorem~\ref{thm:whysrinformal}, improving one-step recovery on SR's revision inputs raises $J_\phi$ for every revision-using $\phi$ that covers them.

\begin{figure}[t]
    \centering
    \includegraphics[width=0.85\linewidth]{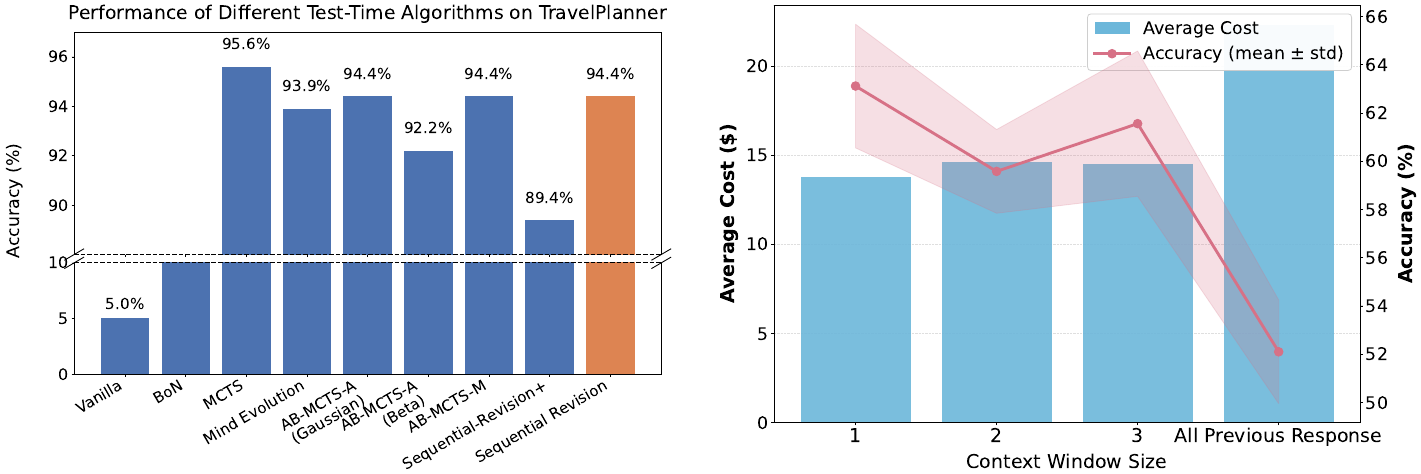}
    \caption{DeepSeek-V3 on TravelPlanner. \textbf{Left:} sequential revision vs.\ Mind Evolution / MCTS / AB-MCTS. \textbf{Right:} cost-accuracy as the revision context window covers the $k$ most recent responses, $k \in \{1, 2, 3\}$, or the full history.}
    \label{fig:tts_landscape}
\end{figure}

\paragraph{Optimizing $\JSR$ is not optimizing single-shot pass@1.}
A natural question is whether standard single-shot RLVR training already addresses $\JSR$. It does not: two policies can be \emph{indistinguishable under $\JOneShot$} on every problem in $\Xcal$ yet differ substantially under $\JSR$.

\begin{theorem}[Objective mismatch]
  \label{thm:mismatch}
  For any revision length $K \geq 2$, there exist policies $\pi_1, \pi_2$, $\Delta(K) > 0$, and a problem distribution $\Xcal$ such that
  $
  \JOneShot(\pi_1) = \JOneShot(\pi_2)$, while $
  \JSR(\pi_2) - \JSR(\pi_1) \geq \Delta(K).
  $
\end{theorem}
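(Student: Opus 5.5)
We prove Theorem~\ref{thm:mismatch} by an explicit construction: a single problem, binary reward, and two policies that agree on the initial prompt $x$ and differ only on revision prompts. Take $\Xcal$ to be a point mass on one problem $x$, and let $\Ycal=\{y^+,y^-\}$ with $\rstar(x,y^+)=1$ and $\rstar(x,y^-)=0$. Let the feedback $f$ be deterministic given the attempt, so the only revision input SR ever forms before stopping is $z^-=(x,y^-,f^-)$. Fix $p\in(0,1)$, say $p=1/2$. Define both policies on the original prompt by $\pi_1(y^+\mid x)=\pi_2(y^+\mid x)=p$. This immediately gives $\JOneShot(\pi_1)=\JOneShot(\pi_2)=p$, since $\JOneShot$ only queries the policy at $s_1=x$.

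On the revision input the two policies differ. Let $\pi_1$ be a \emph{stubborn} reviser that copies its previous wrong attempt: $\pi_1(y^-\mid z^-)=1$. Let $\pi_2$ be a \emph{perfect} reviser: $\pi_2(y^+\mid z^-)=1$. Because SR halts at the first correct response and otherwise revises the most recent attempt, the process is a two-state Markov chain, and $\JSR$ is the probability of hitting $y^+$ within $K$ calls.

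\emph{Computing the two SR values.} Under $\pi_1$, a wrong first attempt stays wrong for all remaining steps, so $\JSR(\pi_1)=p$. Under $\pi_2$, any wrong first attempt is fixed at step $2\le K$, so $\JSR(\pi_2)=1$. Hence
\[
\JSR(\pi_2)-\JSR(\pi_1)=1-p=:\Delta(K)>0 ,
\]
which equals $1/2$ for $p=1/2$, for every $K\ge 2$.

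\emph{Making $\Delta$ depend on $K$.} The statement writes $\Delta(K)$, so a $K$-dependent version is also available. Replace $\pi_2$ by a partial reviser with one-step recovery $q\in(0,1]$ on $z^-$. Then
\[
\JSR(\pi_2)=1-(1-p)(1-q)^{K-1},
\qquad
\Delta(K)=(1-p)\bigl[1-(1-q)^{K-1}\bigr],
\]
which is positive for every $K\ge2$ and increasing in $K$. This matches the hazard-style decomposition of Lemma~\ref{lem:hazard}, though we compute it directly and do not rely on that lemma.

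\emph{Main obstacle.} The only subtle point is admissibility. The policies must be legitimate conditional distributions over the full prompt space, and SR must actually reach $z^-$. Both conditions hold because we are free to define $\pi_i(\cdot\mid s)$ arbitrarily, for instance uniformly, on prompts SR never visits. We also need the feedback to make $z^-$ the revision state, which is ensured by making $f$ a deterministic function of $y$ (e.g.\ $f^-=$ ``incorrect''). If a nondegenerate $\Xcal$ is preferred, the same construction can be applied independently to each problem in the support.
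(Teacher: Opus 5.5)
Your construction is correct, but it gets the gap from a different mechanism than the paper does.

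The paper uses two \emph{history-independent} policies on a single prompt whose ground truth $y^\star$ is drawn uniformly from $N$ candidates. $\pi_1$ always outputs $y^{(1)}$, and $\pi_2$ samples uniformly and independently at every step. Both have one-shot value $1/N$, but $\JSR(\pi_1)=1/N$ while $\JSR(\pi_2)=1-(1-1/N)^K$. There the gap comes purely from diversity across repeated attempts, a pass@$K$-type effect. Neither policy ever conditions on its previous attempt or on the feedback.

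Your policies instead coincide exactly on the original prompt and differ only at the revision state $z^-$. So your gap is precisely the effect of the one-step recovery probability $V_\pi(z^-)$, which is what Lemma~\ref{lem:hazard} isolates and REVES trains. This buys you three things:
\begin{enumerate}
\item Your two policies have identical one-shot output distributions, so no single-shot objective or gradient can separate them. In the paper's construction the one-shot values agree only after averaging over the random $y^\star$; conditional on $y^\star=y^{(1)}$, $\pi_1$ is strictly better.
\item Your gap $1-p$ can be pushed arbitrarily close to $1$ already at $K=2$, where the paper's gap is $(N-1)/N^2\le 1/4$.
\item You need no randomized ground truth.
\end{enumerate}

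In return, the paper's version shows that the mismatch appears even among policies that ignore the revision context entirely. Its gap $1-1/N-(1-1/N)^K$ grows with $K$, much like your partial-reviser variant.
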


The implication is direct: $\JOneShot$ and $\JSR$ are different objectives, and policies with identical single-shot performance can have substantially different sequential-revision performance. Our problem reduces to a single concrete question: how do we post-train $\pi_\theta$ so that $\JSR$ goes up? The next section presents our algorithm.

\begin{tightbox}{paletteNavy}{Takeaway: we adopt the sequential-revision objective $\JSR$ as our training target.}
Improving test-time performance is not the same as optimizing single-shot pass@1 (Theorem~\ref{thm:mismatch}). Among test-time scaling algorithms, sequential revision is both empirically strong and the one whose gains transfer to every other revision-using algorithm (Theorem~\ref{thm:whysrinformal}), which makes it the right objective to optimize.
\end{tightbox}

% =====================================================================
% Material commented out for length; restore if page budget permits.
% =====================================================================
%
% \paragraph{Generation and revision primitives.}
% Every TTS algorithm we are aware of interacts with $\pi_\theta$ through one of
% two primitives:
% \[
% G_\pi(\cdot \mid x) = \pi(\cdot \mid x),
% \qquad
% R_\pi(\cdot \mid x, y_{\mathrm{prev}}, f) = \pi(\cdot \mid x, y_{\mathrm{prev}}, f),
% \]
% i.e., \emph{generation} from scratch and \emph{revision} of a prior attempt
% $y_{\mathrm{prev}}$ conditioned on feedback $f$. Each primitive call takes only
% a local input (either $x$ for $G$, or $(x, y_{\mathrm{prev}}, f)$ for $R$); the
% TTS algorithm's control flow may depend on the full history, but the policy
% itself is queried in a Markov-revision fashion.
% Let $n_G(\phi, K)$ and $n_R(\phi, K)$ denote the expected number of $G$- and
% $R$-calls in a $K$-step run of $\phi$.
% \begin{center}
% \small
% \begin{tabular}{lcc}
% \toprule
% $\phi$ & $n_G(\phi, K)$ & $n_R(\phi, K)$ \\
% \midrule
% One-shot & 1 & 0 \\
% Sequential revision ($\phi_{\mathrm{SR}}$) & 1 & $K-1$ \\
% Best-of-$N$, Self-Consistency & $K$ & 0 \\
% MCTS, AB-MCTS, Mind Evolution & $\ge 1$ & $\ge 1$ \\
% \bottomrule
% \end{tabular}
% \end{center}
% We denote by $\PhiR := \{\phi : n_R(\phi, K) \ge 1 \text{ for } K \ge 2\}$
% the family of \emph{revision-invoking} TTS algorithms.

% !TEX root = main.tex
\section{REVES: From Path-Level Credit to Recovery-Level Supervision}
\label{sec:method}

Section~\ref{sec:meta-rl-formulation} established that improving $\JSR$ is our focus. This section turns that target into an algorithm. We start by writing $\JSR$ exactly as a sum of per-state one-step recovery probabilities along visited trajectories; this hazard decomposition makes the natural training signal explicit and reads off the algorithm directly from the figure.

\begin{lemma}
\label{lem:hazard}
Let $\zeta = (z_1, y_1, \ldots, z_\tau, y_\tau) \sim (\pi_\theta, \SR)$ be a stopped SR trajectory under $\pi_\theta$: at each step $t$ the model samples $y_t \sim \pi_\theta(\cdot \mid z_t)$ and stops at the first verified correct response or at budget $K$, with $z_t = (x, y_{t-1}, f_{t-1})$ the revision state before generating $y_t$. Let
$
V_\pi(z_t) := \Prob_{y' \sim \pi_\theta(\cdot \mid z_t)}\!\big(\rstar(x, y') = 1\big)
$
denote the one-step recovery probability at $z_t$, where $y'$ is a fresh sample (not the trajectory's own $y_t$), and let
$
\rho_\theta(z) := \E_{x \sim \Xcal,\, \zeta \sim (\pi_\theta, \SR)}\!\Big[\sum_{t=1}^{\tau} \mathbf{1}\{z_t = z\}\Big]
$
denote the expected number of times state $z$ is visited along a SR rollout. Then
\begin{align}
\JSR(\theta) = \sum_{t=1}^{K} \E_{x, \zeta}\!\big[\mathbf{1}\{\tau \ge t\}\, V_\pi(z_t)\big] \notag &= \E_{x \sim \Xcal,\, \zeta \sim (\pi_\theta, \SR)}\!\left[\sum_{t=1}^{\tau} V_\pi(z_t)\right] \\
&= \sum_{z}  \rho_\theta(z)\, \E_{y' \sim \pi_\theta(\cdot \mid z)}\!\big[\rstar(x, y')\big].
\label{eq:hazard-decomp}
\end{align}
\end{lemma}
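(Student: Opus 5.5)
The plan is to rewrite $\JSR$ as a sum of per-step success probabilities, and then replace each step's realized success by its conditional expectation given the history up to $z_t$. I will work in the binary-reward case, where $\rstar(x,y)\in\{0,1\}$; this is the case in which $V_\pi$ (a probability) and $\E_{y'}[\rstar(x,y')]$ in \eqref{eq:hazard-decomp} agree.

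\textbf{Step 1: reduce to per-step success events.} Since $\SR$ halts at the first correct response and the reward is binary, $\rstar(x,y_\tau)=1$ holds exactly when some $t\le K$ has $\tau=t$ and $\rstar(x,y_t)=1$. These events are disjoint in $t$. Moreover $\{\tau=t,\ \rstar(x,y_t)=1\}=\{\tau\ge t,\ \rstar(x,y_t)=1\}$, because reaching step $t$ with a correct $y_t$ forces the stop at $t$. Hence
\[
\JSR(\theta)=\sum_{t=1}^{K}\Prob\bigl(\tau\ge t,\ \rstar(x,y_t)=1\bigr).
\]
Any response at a step $t>\tau$ is never generated, so it causes no ambiguity.

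\textbf{Step 2: condition on the history before $y_t$.} Let $\mathcal{F}_{t}$ be the $\sigma$-field generated by $(x,z_1,y_1,\dots,y_{t-1})$. The event $\{\tau\ge t\}=\{\rstar(x,y_s)\neq 1\ \forall s<t\}$ is $\mathcal{F}_t$-measurable. The state $z_t=(x,y_{t-1},f_{t-1})$ is also a function of $\mathcal{F}_t$, since the feedback is produced from $y_{t-1}$. Under $\SR$ the policy is Markov in the current state, so conditionally on $\mathcal{F}_t$ we have $y_t\sim\pi_\theta(\cdot\mid z_t)$. Therefore
\[
\Prob\bigl(\rstar(x,y_t)=1\mid\mathcal{F}_t\bigr)=V_\pi(z_t).
\]
Here it matters that $z_t$ contains $x$, so $V_\pi(z_t)$ is well defined as a function of the state alone. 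The tower property then gives
\[
\Prob\bigl(\tau\ge t,\ \rstar(x,y_t)=1\bigr)=\E\bigl[\mathbf{1}\{\tau\ge t\}\,V_\pi(z_t)\bigr].
\]
Summing over $t$ yields the first equality. This is the step where the "fresh sample" $y'$ in the definition of $V_\pi$ is justified: we replace the trajectory's own $y_t$ by its conditional law.

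\textbf{Step 3: rearrange the sums.} Since $K$ is finite, linearity moves the sum inside the expectation, and $\sum_{t=1}^K\mathbf{1}\{\tau\ge t\}V_\pi(z_t)=\sum_{t=1}^{\tau}V_\pi(z_t)$; this gives the second equality. For the third, write $V_\pi(z_t)=\sum_z \mathbf{1}\{z_t=z\}V_\pi(z)$. Exchanging the finite $t$-sum and the $z$-sum with the expectation is allowed because all terms are nonnegative (Tonelli). This produces $\sum_z\rho_\theta(z)V_\pi(z)$, and $V_\pi(z)=\E_{y'\sim\pi_\theta(\cdot\mid z)}[\rstar(x,y')]$ in the binary case.

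The main obstacle is Step 2: setting up the filtration correctly, so that $\{\tau\ge t\}$ and $z_t$ are both determined before $y_t$ is drawn. A secondary point is that if partial credit $\rstar\in(0,1)$ were allowed at budget exhaustion, Step 1 would fail. I would therefore state the binary assumption explicitly. Alternatively, one could add a correction term $\E[\mathbf{1}\{\tau=K\}\rstar(x,y_K)]$ for a failed final step.
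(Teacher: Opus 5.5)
Your proposal is correct and follows the paper's proof essentially step for step. You write success as the disjoint union of the events $\{\tau \ge t,\ \rstar(x,y_t)=1\}$, use the tower property and the Markov property of $\pi_\theta$ in $z_t$ to replace each by $\E[\mathbf{1}\{\tau\ge t\}\,V_\pi(z_t)]$, and then rearrange the finite sums into the visit-count form. Your explicit filtration, the Tonelli justification, and above all the binary-reward assumption spell out what the paper leaves implicit: its step $\JSR(\theta) = \Prob(\exists\, t \le K : \rstar(x,y_t)=1)$ silently requires $\rstar\in\{0,1\}$, even though $\rstar$ is declared $[0,1]$-valued.
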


See Appendix~\ref{appendix:proof-hazard} for the proof. Equation~\eqref{eq:hazard-decomp} reads the algorithm off Figure~\ref{fig:reves-illustrative}~(b). The horizontal SR trajectory provides the outer samples $z \sim \rho_\theta$, and at each visited state $z = (x, y_{t-1}, f_{t-1})$ the branches drawn from $\pi_\theta(\cdot \mid z)$ are online rollouts that estimate and optimize the per-state factor $V_\pi(z) = \E_{y' \sim \pi_\theta(\cdot \mid z)}[\rstar(x, y')]$. Since $\JSR$ is a nonnegative-weighted sum of these factors, raising $V_\pi$ at any visited $z$ directly raises $\JSR$, without horizon credit assignment.

% \paragraph{Comparison with multi-turn RL.}
% Multi-turn RL rolls out a revision trajectory and broadcasts a single trajectory-level advantage (the terminal verifier reward, possibly normalized by a group baseline) to every per-turn log-probability gradient. Take a concrete wrong, wrong, correct rollout: all three log-probability gradients receive the same positive advantage, including the two wrong-intermediate steps that did not themselves produce the correct answer. Per rollout this misallocates credit: the early wrong intermediates are reinforced for a downstream success they did not cause. Aggregating across many rollouts cancels the per-rollout variance, but the per-turn credit itself remains biased.
% Our recovery-level form works the other way around: by conditioning on a state $z$ and drawing $N$ fresh samples there, $\widehat V_\pi(z)$ is unbiased and low-variance. The tradeoff is on coverage: a state $z$ enters the gradient only in proportion to $\rho_\theta(z)$, so states the policy rarely visits receive little training signal even when the per-state estimator is sharp. 

\paragraph{Comparison with multi-turn RL.}
Multi-turn RL provides an unbiased policy-gradient estimator for the
trajectory-level objective. However, its per-turn learning signal is
highly indirect. Consider a wrong--wrong--correct rollout: the gradients
for all three turns are weighted by the same positive terminal advantage,
including the two turns that produced incorrect intermediate responses.
This is valid for optimizing terminal success of the whole trajectory,
but it yields a high-variance and coarse signal for improving the local
revision behavior at each intermediate state.

Our method instead reuses such incorrect intermediate responses as
revision states. Given a state
$
z=(x,y_{\mathrm{prev}},f_{\mathrm{prev}}),
$
we draw fresh samples from \(\pi(\cdot\mid z)\) and estimate the local
one-step recovery probability
$
V_\pi(z)
=
\Pr_{y\sim\pi(\cdot\mid z)}
\bigl[r^\star(x,y)=1\bigr].
$
Thus, the intermediate wrong response is not treated as an action to
imitate; it is treated as a state from which the model should learn to
recover. This converts a successful long-horizon trajectory into several
short-horizon revision problems, providing a sharper training signal for
local recovery while relying on the current policy to discover useful
intermediate states.

\paragraph{From recovery target to algorithm.}
An ideal on-policy implementation of Lemma~\ref{lem:hazard} draws $z \sim \rho_\theta$ by running SR and runs single-turn RL at each $z$. Three design choices turn this into the algorithm we use.

\emph{(1) Verification prompts alongside revision data.} At test time the verifier is typically unavailable. We therefore include both \emph{revision prompts} and \emph{verification prompts} (asking the policy to judge whether an intermediate response is correct) in the augmented data, so the trained policy can self-stop at deployment. The split of gains across the two prompt types is in Table~\ref{tab:ablation-verification}.

\emph{(2) Offline generation.} An on-policy estimator of $\nabla \JSR$ would need fresh SR rollouts every gradient step, which is intrinsically serial and prohibitively slow. We instead generate visited states once per epoch in Stage~I, reuse them through Stage~II, and refresh at the next epoch.

\emph{(3) Successful-trajectory filtering.} For a far-from-optimal base policy, online samples at states from failed rollouts rarely produce a verifier-accepted recovery and waste tokens. We restrict Stage~I to states on trajectories that succeed within budget $K$, where at least one recovery target exists by construction; the filter loosens as $\pi_\theta$ improves.

\begin{figure*}[t]
\centering
\includegraphics[width=0.95\linewidth]{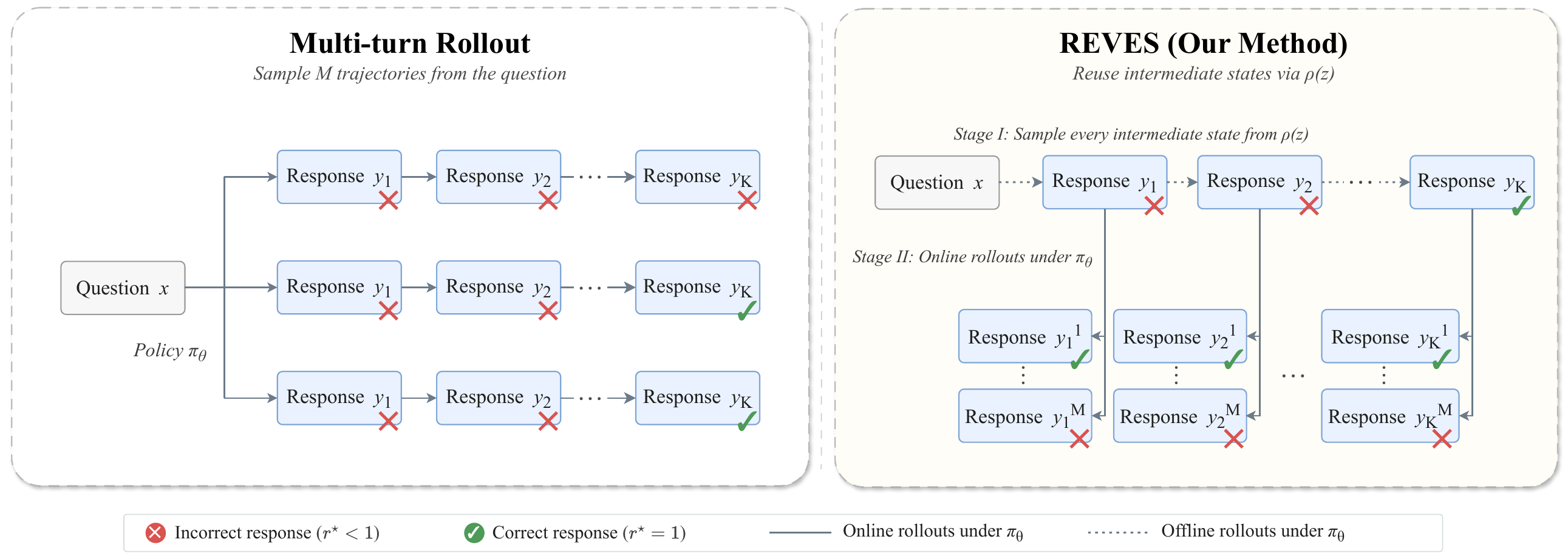}
\caption{Mapping Eq.~\eqref{eq:hazard-decomp} onto the figure. \textbf{(a)} Multi-turn RL training rollouts under the current $\pi_\theta$. \textbf{(b)} The horizontal hatched chain is a retained sequential-revision trajectory; it determines the visit-count weight $\rho_\theta(z)$ at each intermediate state $z = (x, y_{t-1}, f_{t-1})$. The branches at each visited $z$ are fresh samples from $\pi_\theta(\cdot \mid z)$; their verifier outcomes provide the per-state estimate of $V_\pi(z) = \E_{y' \sim \pi_\theta(\cdot \mid z)}[\rstar(x, y')]$.}
\label{fig:reves-illustrative}
\end{figure*}

\paragraph{Two-stage realization.}
\textbf{Stage~I (Data augmentation).} For each input $x$, run SR under the current $\pi_\theta$: $y_1 \sim \pi_\theta(\cdot \mid x)$ and $y_t \sim \pi_\theta(\cdot \mid x, y_{t-1}, \texttt{Feedback}(y_{t-1}))$ for $t = 2, \ldots, K$, terminating at the first correct response or after $K$ attempts. Discard trajectories that fail or are already correct at $y_1$. From each retained trajectory $(x, y_1, \ldots, y_\ell)$ with $\rstar(x, y_\ell) = 1$, build (i) \emph{revision data} $\{\texttt{Revision Prompt}(x, y_i)\}_{i=1}^{\ell}$, where each prompt asks the model to revise the previous response, and (ii) \emph{verification data} $\{\texttt{Verification Prompt}(x, y_i)\}_{i=1}^{\ell}$, where each prompt asks the model to judge whether $y_i$ is correct. Templates are in Appendix~\ref{appendix: setup}. \textbf{Stage~II (Single-turn RL).} Train $\pi_\theta$ on the union of the original RL prompts and the prompts produced in Stage~I. The updated policy serves as the rollout policy for the next epoch's Stage~I. Figure~\ref{fig:alg_overview} shows the loop.

\begin{figure*}[t]
    \centering
    \includegraphics[width=\linewidth]{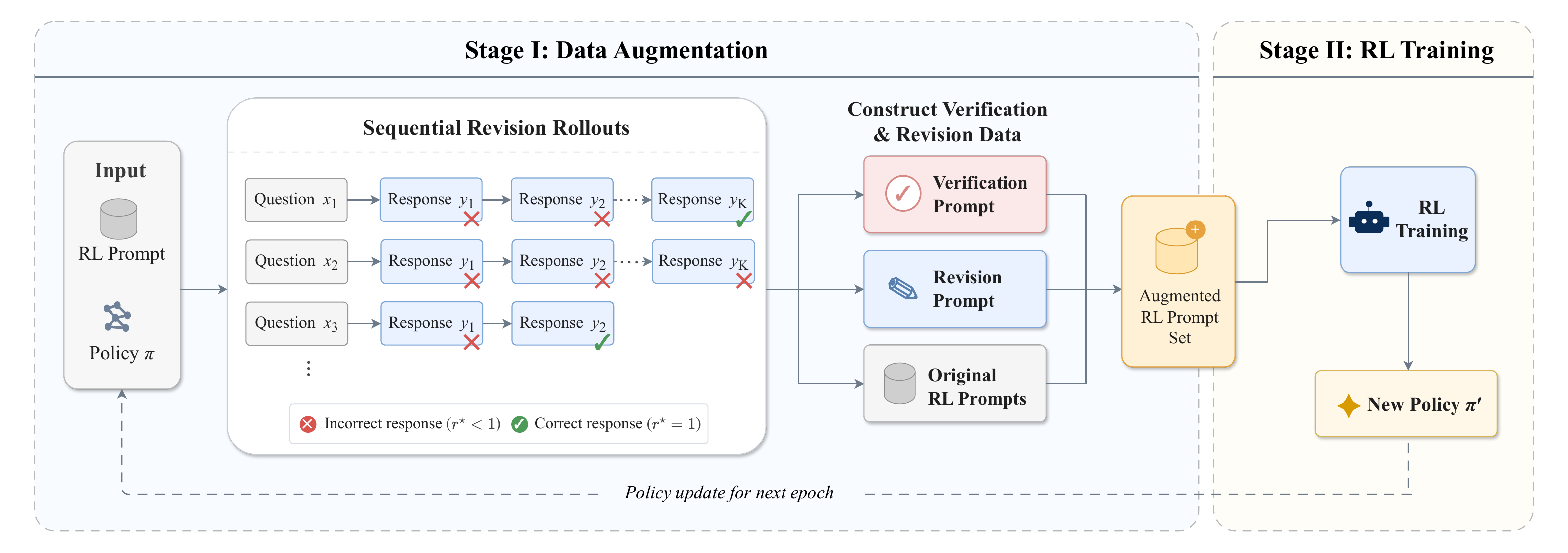}
    \caption{Overview of the proposed RL-based data augmentation framework. In Stage~I, the current policy performs sequential revision; only trajectories that succeed within budget $K$ are retained, and their intermediate states yield revision and verification prompts that are merged with the original RL prompt set. In Stage~II, the policy is updated by single-turn RL on the augmented set, and the new policy regenerates the augmentation in the next epoch.}
    \label{fig:alg_overview}
\end{figure*}

\paragraph{Efficiency.}
Our algorithm is also more efficient than standard multi-turn training. First, augmented samples are generated in an \emph{off-policy} manner. With a well-designed infrastructure, data augmentation can run in parallel with RL training, so the model continues training on the original RL prompts while new augmented data are generated asynchronously, avoiding any interruption to the main training loop. Second, training itself is single-turn. Each gradient step processes one (prompt, response) pair as in ordinary RLVR, with no per-step sequential rollout in the inner loop, which is the dominant cost of multi-turn RL with budget $K$. Appendix~\ref{appendix:efficiency} reports the wall-clock comparison.

\paragraph{Test-time stopping rule.}
At test time the oracle stopping time $\tau$ is almost always unavailable, so we adopt a surrogate stopping time $\hat\tau$ that replaces $\rstar$ with a domain-specific signal.

For mathematical reasoning tasks, we leverage the \emph{Tail Confidence} metric of \citet{fu2025deepthinkconfidence} as the surrogate signal. At each revision step $t$, the token-level confidence at position $i$ of $y_t$ is $C_i = -\frac{1}{k}\sum_{j=1}^{k} \log P_i(j)$, where $P_i(j)$ is the probability of the $j$-th most likely token (a larger $C_i$ indicates a more concentrated, more confident distribution). The response-level score $c_t$ averages $C_i$ over the tail positions $i \in T_{\text{tail}}$ of $y_t$. The stopping time is
$
\hat\tau := K \land \min\!\Big\{ t \ge 3 :{c_t}/{\sum_{j=1}^{t} c_j} > c \Big\}
$
for a predefined threshold $c$. If $\hat\tau = K$ (budget exhausted without trigger), we output $\hat y = y_{t^\ast}$ with $t^\ast = \arg\max_{1 \le t \le K} c_t$, breaking ties uniformly at random when several responses share the same confidence. Selecting the most confident response across all $K$ revisions instead of just $y_K$ improves robustness under small budgets by mitigating noise in confidence estimates at the stopping step.

For coding tasks, we use publicly available test cases as a surrogate reward $\tilde r : \Xcal \times \Ycal \to \{0, 1\}$, with $\tilde r(x, y) = \mathbf{1}(y \text{ passes all public test cases})$. The stopping time is
$
\hat\tau \;:=\; K \land \min\{ t : \tilde r(x, y_t) = 1 \},
$
and we output $\hat y = y_K$ when $\hat\tau = K$.

% !TEX root = main.tex
\section{Experimental Results}
\label{sec:experiment}

% \subsection{Training Setup}

\paragraph{Models and data.}
We train Qwen2.5-7B, Qwen2.5-3B \citep{yang2024qwen2technicalreport}, Qwen3-4B(non-thinking) \citep{yang2025qwen3technicalreport} and DeepSeek-R1-Distill-Qwen-7B \citep{deepseekai2025deepseekr1incentivizingreasoningcapability} directly with RL. Training data is sampled from \href{https://huggingface.co/datasets/Skywork/Skywork-OR1-RL-Data}{\texttt{Skywork/Skywork-OR1-RL-Data}} ($7298$ math + $1015$ coding $= 8313$ examples). We evaluate on MATH500 and AIME 24/25 for math; LiveCodeBench \citep{jain2024livecodebenchholisticcontaminationfree} and CodeContest \citep{doi:10.1126/science.abq1158} for coding; and n\_queens / mini\_sudoku from ReasoningGym \citep{stojanovski2025reasoninggymreasoningenvironments} for out-of-distribution puzzles.

\paragraph{Baselines and implementation.}
We compare against \textbf{RL} (single-response RL, no revision), \textbf{MultiTurn} (multi-turn rollouts without explicit self-verification), and \textbf{PAG} \citep{jiang2025pagmultiturnreinforcedllm} (multi-turn with self-verification via a designed reward). Full details in Appendix~\ref{appendix:implementation}.

\subsection{Main Results}
% Across mathematical reasoning, coding, and out-of-distribution puzzle tasks, our
% REVES method consistently substantially improves models' \emph{sequential revision capability}, leading to test-time sequential revision performance improvement.

\paragraph{Consistent improvements on coding tasks.}
Table~\ref{tab:main_coding} demonstrates the benefits of REVES on coding benchmarks.
Under test-time surrogate stopping criteria, where publicly available test cases are used
to determine when to stop and execution results are provided as feedback for subsequent
revisions (see Appendix~\ref{appendix: setup} for an example), REVES consistently
improves performance across different test-time budgets, demonstrating robustness beyond symbolic rewards. Additional results on DeepSeek-R1-Distill-7B \citep{deepseekai2025deepseekr1incentivizingreasoningcapability} are provided in Appendix~\ref{appendix:strong}.

\begin{table}[!ht]
\centering
\caption{Sequential-revision test-time scaling performance on coding benchmarks (LiveCodeBench (LCB), CodeContest). The best result in each column is shown in bold, and the second-best is underlined. TC-$B$ is sequential-revision (SR) test-time scaling with budget $B$, using execution-based test cases as the verifier and feeding the resulting feedback into the next revision step. }
\small
\setlength{\tabcolsep}{3pt}
\renewcommand{\arraystretch}{1.0}
\fitwidth{
\begin{tabular}{l ccc cc ccc}
\toprule
\multirow{2}{*}{\textbf{Model}}
& \multicolumn{3}{c}{\textbf{LCB (Aug 24 -- Jan 25)}}
& \multicolumn{2}{c}{\textbf{LCB (Jan 25 -- May 25)}}
& \multicolumn{3}{c}{\textbf{CodeContest}} \\
\cmidrule(lr){2-4} \cmidrule(lr){5-6} \cmidrule(lr){7-9}

& \textbf{OneShot}
& \textbf{TC-32}
& \textbf{TC-4}
& \textbf{OneShot}
& \textbf{TC-32}
& \textbf{OneShot}
& \textbf{TC-32}
& \textbf{TC-4} \\
\midrule

Qwen-2.5-7B
& $4.8$ & $6.1$ & $6.1$
& $6.0$ & $18.9$
& $0.4$ & $5.7$ & $5.9$ \\

Qwen-2.5-7B-RL
& $18.0$ & $23.0$ & $20.8$
& $20.0$ & $24.2$
& $\mathbf{7.1}$ & $11.1$ & $7.5$ \\

Qwen-2.5-7B-Multi-turn
& $\underline{19.4}$ & $25.5$ & $21.3$
& $\mathbf{20.7}$ & $\underline{27.5}$
& $\underline{6.9}$ & $14.3$ & $8.1$ \\

Qwen-2.5-7B-PAG
& $\mathbf{19.5}$ & $\underline{25.7}$ & $\underline{23.2}$
& $19.8$ & $26.4$
& $6.5$ & $\underline{15.0}$ & $\underline{9.1}$ \\

\rowcolor{palettePale!45}
Qwen-2.5-7B-REVES
& $18.9$ & $\mathbf{29.5}$ & $\mathbf{23.8}$
& $\underline{20.5}$ & $\mathbf{30.0}$
& $\mathbf{7.1}$ & $\mathbf{16.6}$ & $\mathbf{11.7}$ \\

\midrule

Qwen-2.5-3B
& $5.9$ & $15.8$ & $7.2$
& $7.9$ & $18.3$
& $1.6$ & $6.3$ & $2.4$ \\

Qwen-2.5-3B-RL
& $9.2$ & $12.0$ & $\underline{11.5}$
& $14.1$ & $17.8$
& $3.0$ & $4.2$ & $3.6$ \\

Qwen-2.5-3B-Multi-turn
& $\mathbf{9.9}$ & $\underline{15.7}$ & $\mathbf{12.1}$
& $13.2$ & $\underline{19.2}$
& $\underline{3.2}$ & $\underline{7.9}$ & $4.4$ \\

Qwen-2.5-3B-PAG
& $8.1$ & $14.7$ & $11.2$
& $\underline{13.4}$ & $18.5$
& $\mathbf{3.6}$ & $5.7$ & $\underline{4.9}$ \\

\rowcolor{palettePale!45}
Qwen-2.5-3B-REVES
& $\underline{9.6}$ & $\mathbf{17.7}$ & $11.0$
& $\mathbf{14.3}$ & $\mathbf{21.6}$
& $2.2$ & $\mathbf{8.5}$ & $\mathbf{5.5}$ \\

\midrule

Qwen3-4B (base)
& $22.6$ & $27.8$ & $26.8$
& $24.7$ & $24.7$
& $6.1$ & $10.9$ & $9.7$ \\

Qwen3-4B-RL
& $\underline{32.2}$ & $41.7$ & $42.5$
& $\underline{29.1}$ & $35.4$
& $\underline{16.1}$ & $30.9$ & $\underline{30.9}$ \\

Qwen3-4B-Multi-turn
& $29.2$ & $\underline{49.5}$ & $\underline{44.9}$
& $25.3$ & $\underline{40.9}$
& $10.6$ & $\underline{34.9}$ & $\mathbf{34.2}$ \\

Qwen3-4B-PAG
& $29.8$ & $43.6$ & $40.4$
& $26.7$ & $37.1$
& $13.6$ & $32.4$ & $25.8$ \\

\rowcolor{palettePale!45}
Qwen3-4B-REVES
& $\mathbf{32.7}$ & $\mathbf{50.9}$ & $\mathbf{46.3}$
& $\mathbf{29.7}$ & $\mathbf{42.0}$
& $\mathbf{16.4}$ & $\mathbf{37.9}$ & $\underline{30.9}$ \\

\bottomrule
\end{tabular}}
\label{tab:main_coding}
\end{table}

\paragraph{Improved sequential revision capability for mathematical reasoning.}
Table~\ref{tab:math_main} shows that REVES substantially improves sequential revision performance across all evaluated math benchmarks. Under oracle stopping (the Oracle-32 and Oracle-4 columns, where the rollout halts as soon as a generated answer matches the ground truth, isolating the model's ability to revise an incorrect solution into a correct one), our method achieves consistently large gains. These improvements persist under the SelfConf-4 column, the practical test-time stopping rule based on the model's own confidence, indicating that the learned capability transfers beyond the oracle setting.

\paragraph{Matching state-of-the-art on circle packing.}
We further evaluate REVES on the circle packing benchmark, where the goal is to pack $n$ unit-area circles into a unit square to maximize the sum of their radii. We directly optimize Eq.~\eqref{eq:hazard-decomp} on this task. With a Qwen3-4B, our trained policy matches the previously reported best results from much larger evolutionary search systems built on Gemini-2.0 Pro/Flash and Qwen3-8B (Table~\ref{tab:circle-packing}). Full implementation details are in Appendix~\ref{appendix:circle_packing}.

% Circle-packing results table. Natural width ~313pt < 397pt text width, so it
% renders at a true \small (9pt) with no down-scaling.
\begin{table}[t]
\centering
\small
\setlength{\tabcolsep}{6pt}
\renewcommand{\arraystretch}{1}
\caption{Circle packing benchmark (sum of radii at $n{=}26$, $\uparrow$). REVES on Qwen3-4B matches the best previously reported result, reaching the optimal $2.635983$ with the \emph{smallest} base model and fewer rollouts.}
\label{tab:circle-packing}
\resizebox{0.85\linewidth}{!}{%
\begin{tabular}{l l c}
\toprule
\textbf{Method} & \textbf{Model} & \textbf{Sum of radii, $n{=}26$ ($\uparrow$)} \\
\midrule
AlphaEvolve \citep{novikov2025alphaevolve}      & Gemini-2.0 Pro + Flash & $2.635862$ \\
AlphaEvolve V2 \citep{georgiev2025mathematical} & Gemini-2.0 Pro + Flash & $\mathbf{2.635983}$ \\
ShinkaEvolve \citep{lange2025shinkaevolve}    & Ensemble               & $2.635982$ \\
ThetaEvolve \citep{wang2025thetaevolve}    & R1-Qwen3-8B            & $\mathbf{2.635983}$ \\
TTT-Discover  \citep{yuksekgonul2026learningdiscovertesttime}  & Qwen3-8B               & $\mathbf{2.635983}$ \\
\midrule
\rowcolor{palettePale!45}
Ours (REVES)    & \textbf{Qwen3-4B}      & $\mathbf{2.635983}$ \\
\bottomrule
\end{tabular}}
\end{table}

% OOD puzzle plot (native 2:1 aspect ratio). Width chosen to match the circle-packing
% table's natural footprint (~313pt = ~0.79\linewidth) so the two figures look balanced.
\begin{figure}[t]
\centering
\includegraphics[width=0.75\linewidth]{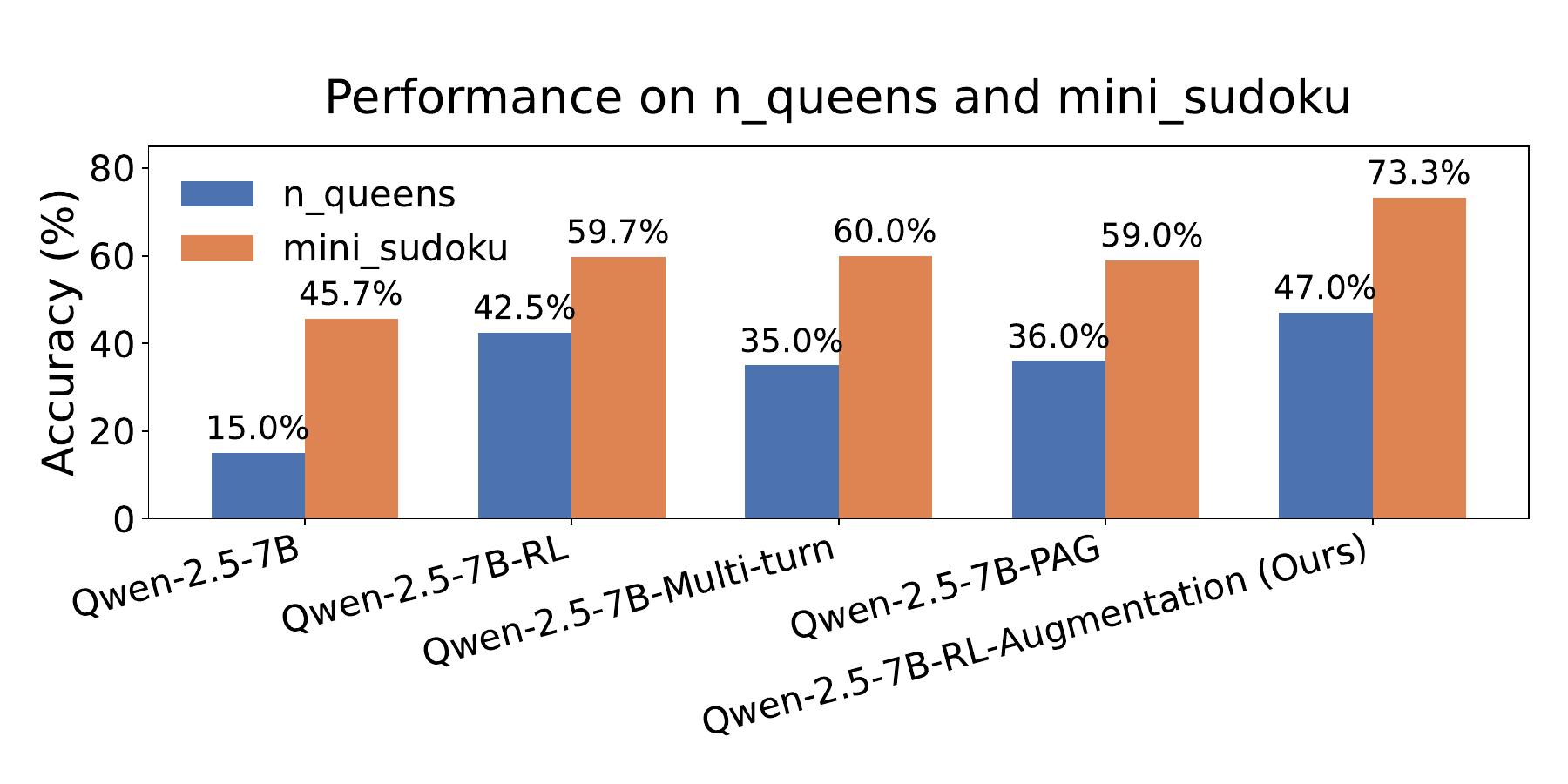}
\caption{Out-of-distribution puzzle benchmarks (n\_queens and mini\_sudoku) under verifier-based stopping. REVES generalizes beyond standard reasoning benchmarks despite using checkpoints trained only on math and code.}
\label{fig:ood-puzzle}
\end{figure}

% \paragraph{Transferable gains under test-time sequential revision.}
%     The improvements generalize across different SeqRev inference settings, including
%     Oracle-32, Oracle-4, and SelfConf-4, without modifying the test-time strategy or budget.
%     This indicates that REVES enhances a \emph{general and test-time-agnostic}
%     sequential revision capability, rather than overfitting to a specific inference rule.

\begin{table*}[htbp]
\centering
\caption{Sequential-revision test-time scaling performance on math benchmarks (AIME24/25, MATH500). \textbf{1-shot} is OneShot; \textbf{O-$B$} (Oracle) and \textbf{SC-$B$} (SelfConf) are sequential-revision (SR) test-time scaling with budget $B$, using the ground-truth answer as the oracle stopping rule and the model's own confidence as a surrogate stopping rule, respectively. $\dagger$ denotes evaluation using the authors' original codebase and protocol. }
\label{tab:math_main}

\small
\setlength{\tabcolsep}{2.5pt}
\renewcommand{\arraystretch}{1.0}

\fitwidth{
\begin{tabular}{l cccc cccc cccc}
\toprule
\multirow{2}{*}{\textbf{Model}}
& \multicolumn{4}{c}{\textbf{AIME 24}}
& \multicolumn{4}{c}{\textbf{AIME 25}}
& \multicolumn{4}{c}{\textbf{MATH 500}} \\
\cmidrule(lr){2-5} \cmidrule(lr){6-9} \cmidrule(lr){10-13}

& \textbf{1-shot}
& \textbf{O-32} & \textbf{O-4} & \textbf{SC-4}
& \textbf{1-shot}
& \textbf{O-32} & \textbf{O-4} & \textbf{SC-4}
& \textbf{1-shot}
& \textbf{O-32} & \textbf{O-4} & \textbf{SC-4} \\
\midrule

Qwen-2.5-7B
& $2.0$ & $14.0$ & $4.8$ & $2.3$
& $0.3$ & $6.3$ & $1.3$ & $0.6$
& $23.8$ & $50.1$ & $29.0$ & $26.1$ \\

Qwen-2.5-7B-RL
& $\mathbf{17.9}$ & $\underline{33.5}$ & $\underline{21.4}$ & $\underline{18.7}$
& $\mathbf{10.4}$ & $22.9$ & $\underline{13.4}$ & $\underline{12.5}$
& $\underline{76.1}$ & $85.9$ & $79.4$ & $\mathbf{78.6}$ \\

Qwen-2.5-7B-Multi-turn
& $14.6$ & $30.3$ & $20.0$ & $13.8^\dagger$
& $7.2$ & $25.3$ & $11.8$ & $9.8^\dagger$
& $75.9$ & $\underline{87.1}$ & $80.1$ & $\underline{78.0}^\dagger$ \\

Qwen-2.5-7B-PAG
& $\underline{15.3}$ & $31.1$ & $19.6$ & $13.0^\dagger$
& $7.5$ & $\underline{26.8}$ & $12.8$ & $9.4^\dagger$
& $\underline{76.1}$ & $86.7$ & $\underline{80.8}$ & $77.2^\dagger$ \\

\rowcolor{palettePale!45}
Qwen-2.5-7B-REVES
& $14.9$ & $\mathbf{45.7}$ & $\mathbf{25.9}$ & $\mathbf{19.6}$
& $\underline{9.3}$ & $\mathbf{40.5}$ & $\mathbf{22.3}$ & $\mathbf{15.1}$
& $\mathbf{76.8}$ & $\mathbf{94.7}$ & $\mathbf{85.6}$ & $76.2$ \\

\midrule

Qwen-2.5-3B
& $1.2$ & $12.9$ & $3.4$ & $1.9$
& $0.5$ & $7.9$ & $1.3$ & $0.6$
& $30.5$ & $67.0$ & $39.3$ & $38.4$ \\

Qwen-2.5-3B-RL
& $\underline{6.4}$ & $16.9$ & $8.1$ & $7.4$
& $1.8$ & $9.0$ & $3.9$ & $2.4$
& $\underline{64.9}$ & $80.5$ & $69.6$ & $65.6$ \\

Qwen-2.5-3B-Multi-turn
& $\mathbf{7.6}$ & $\underline{20.5}$ & $\underline{12.0}$ & $\underline{8.0}^\dagger$
& $2.6$ & $\underline{12.7}$ & $\underline{5.2}$ & $2.3^\dagger$
& $\mathbf{65.3}$ & $\underline{81.2}$ & $\underline{72.7}$ & $\mathbf{67.0}^\dagger$ \\

Qwen-2.5-3B-PAG
& $6.3$ & $14.8$ & $7.9$ & $5.3^\dagger$
& $\mathbf{3.4}$ & $11.4$ & $4.7$ & $\underline{4.1}^\dagger$
& $64.0$ & $78.8$ & $69.9$ & $65.6^\dagger$ \\

\rowcolor{palettePale!45}
Qwen-2.5-3B-REVES
& $6.2$ & $\mathbf{33.7}$ & $\mathbf{14.7}$ & $\mathbf{8.2}$
& $\underline{2.8}$ & $\mathbf{24.4}$ & $\mathbf{7.2}$ & $\mathbf{4.2}$
& $64.5$ & $\mathbf{89.9}$ & $\mathbf{77.8}$ & $\underline{66.0}$ \\

\bottomrule
\end{tabular}
}
\end{table*}

\paragraph{Generalization to out-of-distribution puzzle tasks.}
As illustrated in Figure~\ref{fig:ood-puzzle}, we evaluate puzzle tasks using checkpoints trained only on math and code data, without any puzzle-specific training. Despite this distribution shift, REVES substantially improves performance on out-of-distribution puzzle benchmarks, indicating strong cross-task generalization.

\paragraph{$\JSR$-trained policies also improve other TTS algorithms.}
    Table~\ref{tab:aux_tts} shows that models trained with REVES consistently improve the performance
    of a wide range of test-time algorithms that rely on sequential revision, including
    MCTS \citep{inoue2025widerdeeperscalingllm}, AB-MCTS variants \citep{inoue2025widerdeeperscalingllm} and Mind Evolution \citep{lee2025evolvingdeeperllmthinking}.
    This suggests that sequential revision capability serves as a fundamental backbone that
    raises the performance ceiling of diverse test-time inference strategies.

\begin{table*}[!ht]
\centering
\caption{Comparison of model performance on different test-time strategies. Best results in each column are in bold; second-best are underlined. \textbf{1-shot} is OneShot; \textbf{MindEvo} is Mind Evolution; \textbf{ABM-A($\beta$)}, \textbf{ABM-A(G)}, and \textbf{ABM-M} are the AB-MCTS-A (Beta), AB-MCTS-A (Gaussian), and AB-MCTS-M variants.}
\label{tab:aux_tts}
\small
\setlength{\tabcolsep}{2pt}
\fitwidth{
\begin{tabular}{l cccccccc}
\toprule
\textbf{Model}
 & \textbf{1-shot} & \textbf{SeqRev} & \textbf{BoN} & \textbf{MCTS}
 & \textbf{MindEvo} & \textbf{ABM-A($\beta$)} & \textbf{ABM-A(G)} & \textbf{ABM-M} \\
\midrule
Qwen-2.5-7B-RL
 & $18.0$
 & $23.0$
 & $27.8$
 & $22.7$
 & $25.7$
 & $26.0$
 & $27.3$
 & $26.2$ \\
Qwen-2.5-7B-Multi-turn
 & \underline{$19.4$}
 & $25.5$
 & $28.7$
 & $26.1$
 & $27.0$
 & $28.6$
 & \underline{$29.0$}
 & $27.8$ \\
Qwen-2.5-7B-PAG
 & $\mathbf{19.5}$
 & \underline{$25.7$}
 & $\mathbf{30.4}$
 & \underline{$26.4$}
 & \underline{$27.7$}
 & \underline{$29.7$}
 & $28.1$
 & \underline{$28.7$} \\
\rowcolor{palettePale!45}
Qwen-2.5-7B-REVES
 & $18.9$
 & $\mathbf{29.5}$
 & \underline{$29.3$}
 & $\mathbf{28.9}$
 & $\mathbf{29.0}$
 & $\mathbf{31.0}$
 & $\mathbf{30.0}$
 & $\mathbf{30.4}$ \\
\bottomrule
\end{tabular}
}
\end{table*}

\begin{tightbox}{paletteNavy}{Takeaway: REVES delivers consistent, transferable test-time gains.}
REVES improves sequential revision on code and math, matches state-of-the-art on frontier open problems (circle packing) with a far smaller model, and generalizes to out-of-distribution puzzles despite training only on math and code. As predicted by Theorem~\ref{thm:whysrinformal}, the gains further transfer to other revision-using TTS algorithms.
\end{tightbox}

\subsection{Ablation Study}
\label{sec:ablation}
\paragraph{The role of revision and verification prompts.}
As shown in Table~\ref{tab:ablation-verification}, we investigate the roles of revision and verification prompts in sequential revision training.
Qwen-2.5-7B-RevisionOnly augments RL-generated data using only revision prompts, while
Qwen-2.5-7B-VerificationOnly augments data using only verification prompts.
Qwen-2.5-7B-REVES combines both prompts during data augmentation.
Results under Oracle-4 indicate that revision prompts primarily improve \emph{sequential revision capability}, whereas verification prompts do not directly enhance revision but instead train the model to assess solution validity.

To isolate the effect of verification prompts, we evaluate confidence calibration using Area Under the Receiver Operating Characteristic curve (AUROC) on AIME25.
For each problem, we sample multiple model responses, label them as correct or incorrect based on the final answer, and compute the probability that a randomly sampled correct response receives a higher \texttt{TailConfidence} score than a randomly sampled incorrect one.
The final AUROC is obtained by averaging over problems with sufficient correct samples.
Under this evaluation, Qwen-2.5-7B-REVES achieves a higher AUROC (74.1\%) than RevisionOnly (72.1\%), demonstrating that verification prompts improve confidence calibration.
Thus, combining revision and verification prompts yields consistent gains by jointly enhancing revision capability and confidence estimation.

\begin{table}[t]
\centering
\caption{Effect of revision and verification prompts on AIME25. \textbf{Left:} 1-shot (OneShot), O-4 (Oracle-4), and SC-4 (SelfConf-4) accuracy across the four configurations: \emph{RL} (no augmentation), \emph{VerificationOnly} (verification prompts only), \emph{RevisionOnly} (revision prompts only), and \emph{REVES} (both, our method). \textbf{Right:} \texttt{TailConfidence} distributions for RevisionOnly and REVES; REVES produces sharper, better-separated correct/incorrect distributions, indicating improved confidence calibration.}
\label{tab:ablation-verification}
\begin{minipage}[c]{0.45\linewidth}
\centering
\resizebox{\linewidth}{!}{%
\begin{tabular}{l ccc}
\toprule
\textbf{Model} & \textbf{1-shot} & \textbf{O-4} & \textbf{SC-4} \\
\midrule
Qwen-2.5-7B-RL                 & $\mathbf{10.4}$ & $13.4$ & $\underline{12.5}$ \\
Qwen-2.5-7B-VerificationOnly   & $6.3$  & $10.3$ & $7.1$  \\
Qwen-2.5-7B-RevisionOnly       & $7.9$  & $\underline{16.5}$ & $8.3$  \\
Qwen-2.5-7B-REVES    & $\underline{9.5}$  & $\mathbf{21.2}$ & $\mathbf{15.8}$ \\
\bottomrule
\end{tabular}}
\end{minipage}%
\hfill
\begin{minipage}[c]{0.53\linewidth}
\centering
\includegraphics[width=\linewidth]{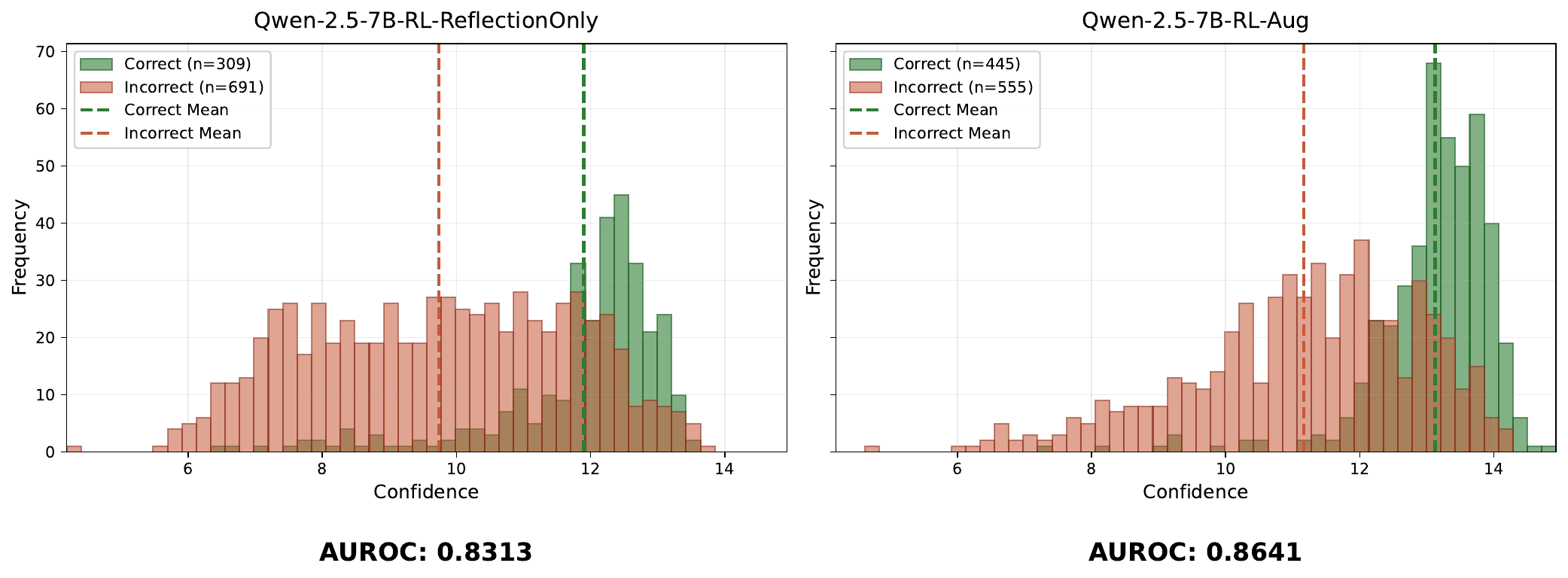}
\end{minipage}
\end{table}
% \vspace{-20pt}
\paragraph{Necessity of continual data augmentation.}
As shown in Figure~\ref{fig:ablation-aime} (right), augmenting data only at the initial epoch leads to substantially worse performance than continuous augmentation. This is because once the model stops making earlier mistakes, stale augmented data no longer provides informative learning signals, whereas regenerating data each epoch keeps the supervision aligned with the model's current failure modes.

% \begin{figure}[t]
%   \centering
%   \begin{minipage}[t]{0.48\columnwidth}
%     \centering
%     \includegraphics[width=\linewidth]{data_aug_ablation_budget_vs_acc_with1_corrected.pdf}
%   \end{minipage}
%   \hfill
%   \begin{minipage}[t]{0.48\columnwidth}
%     \centering
%     \includegraphics[width=\linewidth]{math500_budget_vs_acc.pdf}
%   \end{minipage}
%   \caption{Comparison between AIME 24 and AIME 25.}
% \end{figure}

\paragraph{Effect of augmentation budget.}
We study the effect of the augmentation budget on training performance. As shown in Figure~\ref{fig:ablation-aime} (left), increasing the augmentation budget consistently improves performance, indicating that our method exhibits strong scaling behavior with respect to the augmentation budget. At the same time, allowing a larger augmentation budget $K$ naturally incurs higher computational cost, reflecting a trade-off between performance and compute in data augmentation.

% AIME24 ablations -- both panels share the same aspect ratio so they align.
\begin{figure}[tb]
    \centering
    \begin{minipage}[t]{0.4\linewidth}\centering
        \includegraphics[width=\linewidth]{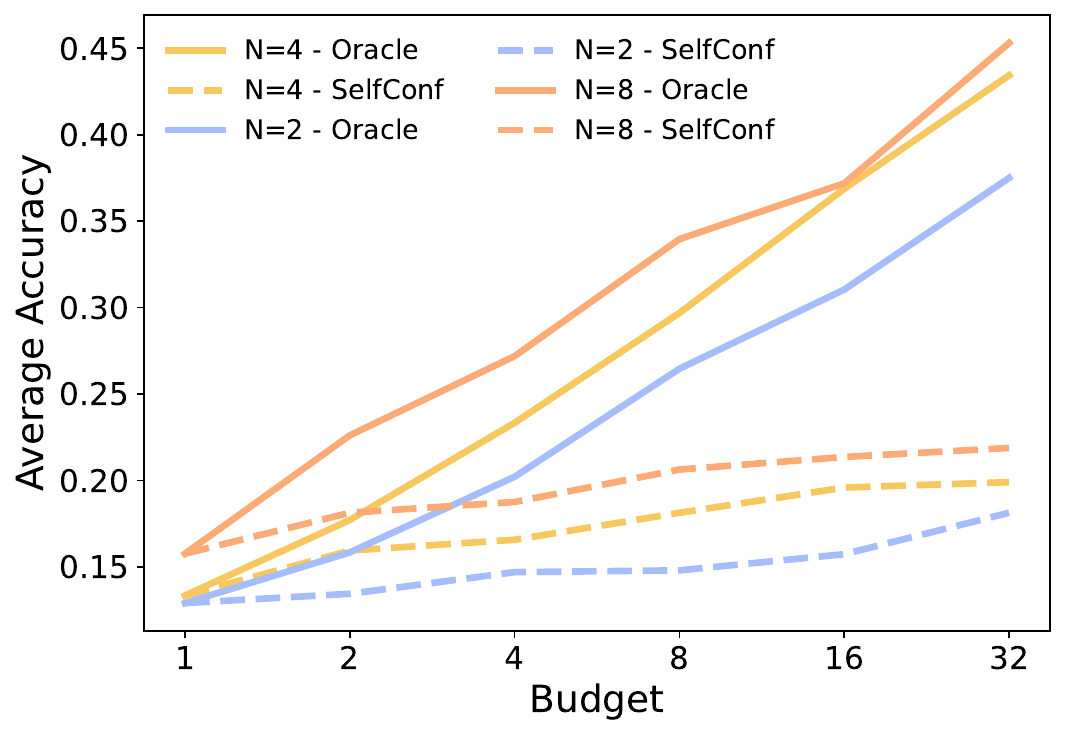}
    \end{minipage}
    \begin{minipage}[t]{0.4\linewidth}\centering
        \includegraphics[width=\linewidth]{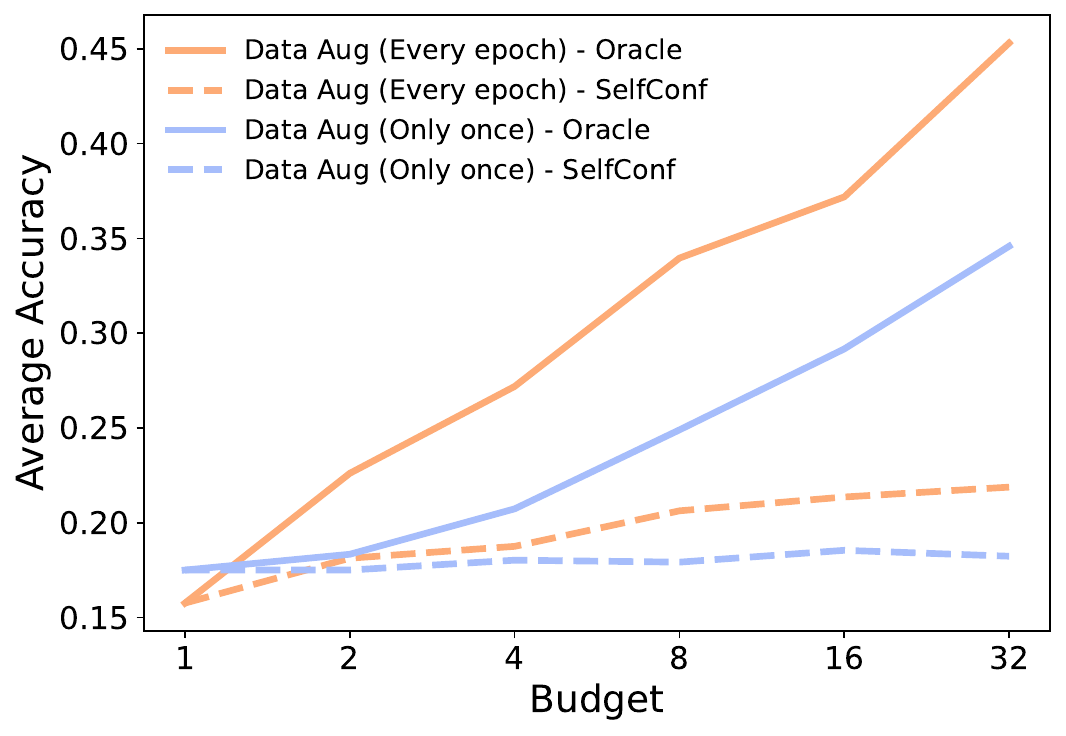}
    \end{minipage}
    \caption{Ablations on data augmentation (AIME24). \emph{Left:} effect of augmentation budget $K$ (the maximum trajectory length allowed during augmentation); \emph{right:} continual versus one-shot augmentation.}
    \label{fig:ablation-aime}
\end{figure}

% MATH500 ablations -- both panels share the same aspect ratio so they align.
\begin{figure}[tb]
    \centering
    \begin{minipage}[t]{0.4\linewidth}\centering
        \includegraphics[width=\linewidth]{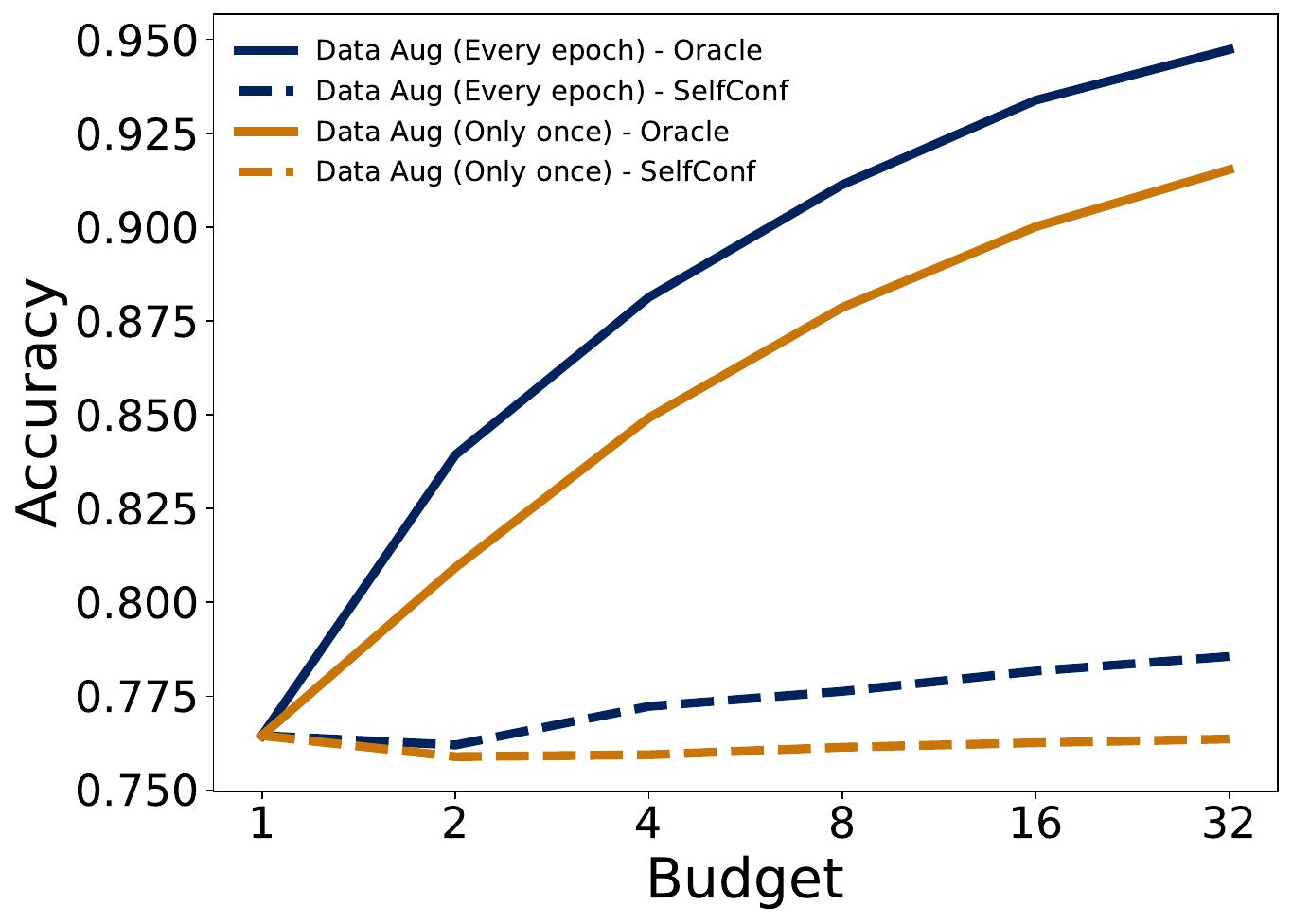}
    \end{minipage}
    \begin{minipage}[t]{0.4\linewidth}\centering
        \includegraphics[width=\linewidth]{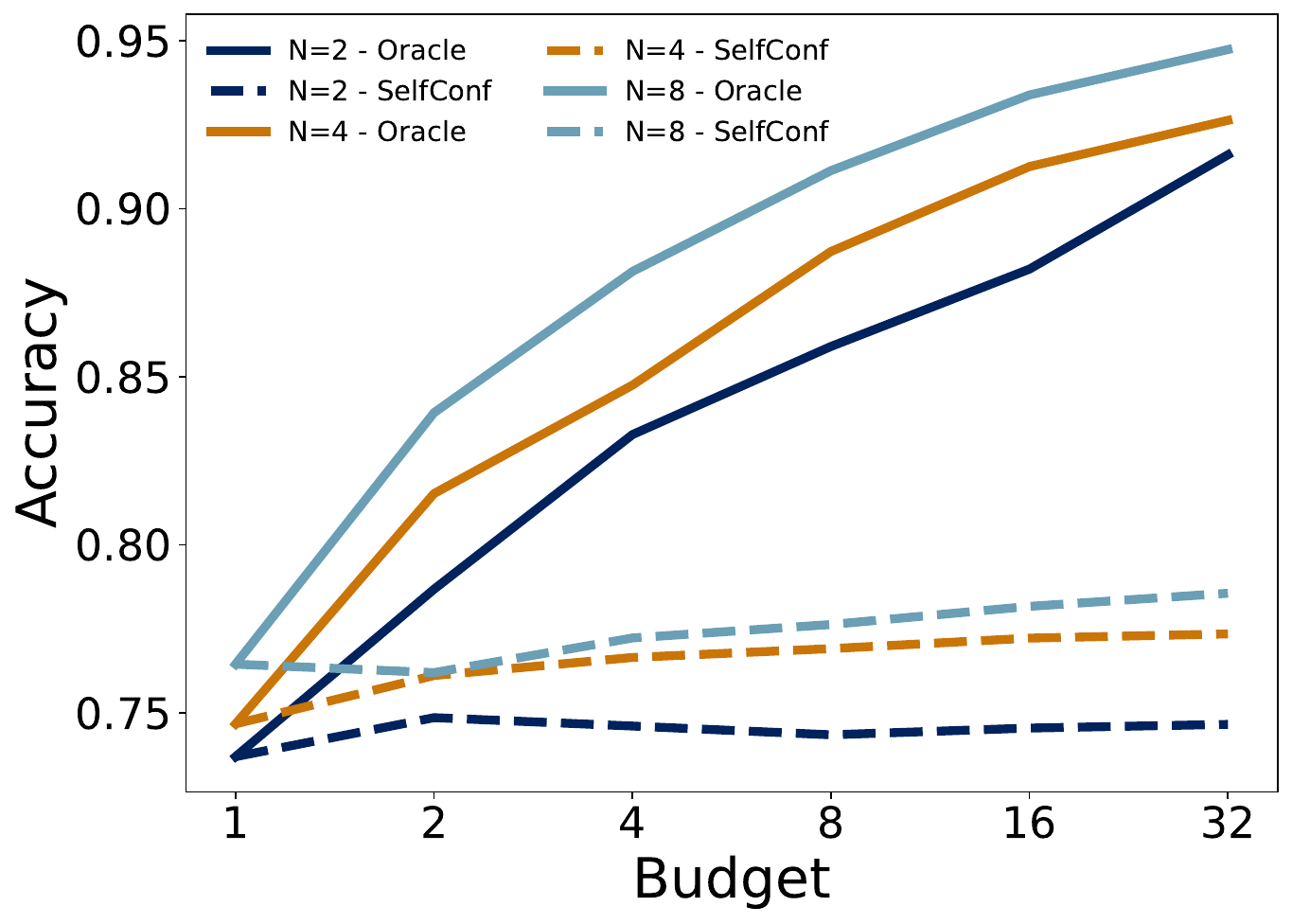}
    \end{minipage}
    \caption{Ablations on data augmentation (MATH500). \emph{Left:} effect of augmentation frequency (continual versus one-shot); \emph{right:} effect of allowed revision budget $K$. }
    \label{fig:ablation-math}
\end{figure}

% Flush all pending floats before the bibliography so figures cannot drift into
% the References list.
% \FloatBarrier

\section{Conclusion}

We show that standard single-shot RL objectives are misaligned with test-time sequential revision, and propose \textbf{REVES}, which pairs RL with continual, mistake-driven data augmentation so the model learns both to revise incorrect solutions and to identify errors. Augmentation must be continual, since static data becomes uninformative once mistakes disappear. Across math, coding, and out-of-distribution tasks, REVES improves test-time sequential-revision performance over the single-shot RL and multi-turn baselines under fixed budgets, and, as predicted by our transfer analysis, these gains further carry over to other revision-using test-time algorithms. On the circle packing benchmark, a Qwen3-4B trained with REVES matches the previously reported state-of-the-art achieved by much larger evolutionary search systems, using the smallest base model and far fewer rollouts. d

% \newpage
\bibliographystyle{colm2026_conference}
\bibliography{references}

%%%%%%%%%%%%%%%%%%%%%%%%%%%%%%%%%%%%%%%%%%%%%%%%%%%%%%%%%%%%%%%%%%%%%%%%%%%%%%%%%
\appendix
% !TEX root = main.tex
% \input{related_work}
\section{Sequential Revision Protocol}
\label{appendix: setup}

In this section, we elaborate on the test-time protocol described in Protocol~\ref{alg:seqrev_setup} and illustrate its practical usage with several concrete examples, including a mathematical reasoning task in Appendix~\ref{appendix: math_exp} and a coding task in Appendix~\ref{appendix: code_exp}.

\floatname{algorithm}{Protocol}
\begin{algorithm}[ht]
  \caption{Sequential Revision Protocol}
  \label{alg:seqrev_setup}
  \begin{algorithmic}[1]
    \STATE \textbf{Input:} revision budget $K$, question $x$, model $\pi$
    \STATE Sample initial response $y_1 \sim \pi(\cdot \mid x)$
    
    \FOR{$i = 2$ {\bfseries to} $K$}
        \STATE Generate revised response $y_i \sim \pi(\cdot \mid x, y_{i-1}, \texttt{Feedback}(y_{i-1}))$
        \IF{$r^\star(x, y_{i}) = 1$}
            \STATE \# Early stopping
            \STATE Set $\hat{y} = y_i$
            \STATE \textbf{break for loop}
        \ENDIF
    \ENDFOR
    
    \IF{No early stopping}
        \STATE $\hat{y} = y_K$
    \ENDIF
    
  \end{algorithmic}
\end{algorithm}

\subsection{Example Math Task}
\label{appendix: math_exp}

In mathematical tasks, the feedback consists solely of the sentence \textcolor{red}{“The response is incorrect.”} (marked in red in the figure), without offering further diagnostic information.

\begin{inputbox}{Input (Turn 1)}
Solve the following math problem. Show brief reasoning and provide the final answer.

Problem:
A $90^\circ$ rotation around $-4 - 5i$ in the clockwise direction is applied to $3$ (as a complex number).  
What is the resulting complex number?
\end{inputbox}

\begin{outputbox}{LLM Output (Turn 1)}
To solve this problem, we need to perform a $90^\circ$ clockwise rotation around the complex number $-4 - 5i$. 

First, it's useful to understand that a $90^\circ$ clockwise rotation of a complex number $z$ around another complex number $a$ is equivalent to the following transformation:
\[
z' = a + (a - z) \cdot i
\]
However, since we are rotating around $-4 - 5i$, let's denote this center of rotation as $a = -4 - 5i$. Our complex number $z = 3$ needs to be rotated.

1. Find the difference between $z$ and $a$:
\[
z - a = 3 - (-4 - 5i) = 3 + 4 + 5i = 7 + 5i
\]

2. Perform the $90^\circ$ clockwise rotation by multiplying this difference by $i$ and adding it back to $a$:
\[
z' = a + (z - a) \cdot i = (-4 - 5i) + (7 + 5i) \cdot i
\]

Next, simplify the expression:
\[
(7 + 5i) \cdot i = 7i + 5i^2
\]
Since \(i^2 = -1\):
\[
7i + 5(-1) = 7i - 5
\]

$$\cdots$$
Therefore, the final answer is $\boxed{-9 + 2i}$.
\end{outputbox}

\begin{inputbox}{Input (Turn 2)}
Problem:
A $90^\circ$ rotation around $-4 - 5i$ in the clockwise direction is applied to $3$ (as a complex number).  What is the resulting complex number?

Previous Attempt:
To solve this problem, we need to perform a $90^\circ$ clockwise rotation around the complex number $-4 - 5i$. 

First, it's useful to understand that a $90^\circ$ clockwise rotation of a complex number $z$ around another complex number $a$ is equivalent to the following transformation:
\[ z' = a + (a - z) \cdot i \]
However, since we are rotating around $-4 - 5i$, let's denote this center of rotation as $a = -4 - 5i$. Our complex number $z = 3$ needs to be rotated.

1. Find the difference between $z$ and $a$:
\[ z - a = 3 - (-4 - 5i) = 3 + 4 + 5i = 7 + 5i \]

2. Perform the $90^\circ$ clockwise rotation by multiplying this difference by $i$ and adding it back to $a$:
\[ z' = a + (z - a) \cdot i = (-4 - 5i) + (7 + 5i) \cdot i \]

Next, simplify the expression:
\[ (7 + 5i) \cdot i = 7i + 5i^2 \]
Since \(i^2 = -1\):
\[ 7i + 5(-1) = 7i - 5 \]
\[\cdots \]

So, the resulting complex number after the rotation is \(-9 + 2i\).

Therefore, the final answer is \(\boxed{-9 + 2i}\).

Verifiable Feedback:
{\color{red}{The response is incorrect.}}

Now produce a corrected solution. Keep reasoning concise.
\end{inputbox}

\begin{outputbox}{LLM Output (Turn 2)}
To correctly rotate a complex number \(3\) around \(-4 - 5i\) by \(90^\circ\) clockwise, we need to use the appropriate transformation for rotations in the complex plane.

Given a complex number \(z = 3\), and the center of rotation \(a = -4 - 5i\), a \(90^\circ\) clockwise rotation of \(z\) around \(a\) can be represented as:
\[ z' = a + (a - z) \cdot (-i) \]

Let's go through the steps in detail:

1. \textbf{Calculate \(a - z\):}
\[ a - z = (-4 - 5i) - 3 = -4 - 5i - 3 = -7 - 5i \]

2. \textbf{Multiply by \(-i\):}
\[ (a - z) \cdot (-i) = (-7 - 5i) \cdot (-i) = (-7)(-i) + (-5i)(-i) = 7i + 5i^2 \]
Since \(i^2 = -1\):
\[ 7i + 5(-1) = 7i - 5 \]

3. \textbf{Add this result back to \(a\):}
\[ z' = a + (7i - 5) = (-4 - 5i) + (-5 + 7i) \]
\[ z' = -4 - 5i - 5 + 7i \]
Combine the real and imaginary parts:
\[ z' = (-4 - 5) + (-5i + 7i) = -9 + 2i \]

Thus, after performing the \(90^\circ\) clockwise rotation, the resulting complex number is \(\boxed{-9 + 2i}\).
\end{outputbox}

\begin{inputbox}{Input (Turn 3)}
Problem:
A $90^\circ$ rotation around $-4 - 5i$ in the clockwise direction is applied to $3$ (as a complex number).  What is the resulting complex number?

Previous Attempt:
To correctly rotate a complex number \(3\) around \(-4 - 5i\) by \(90^\circ\) clockwise, we need to use the appropriate transformation for rotations in the complex plane.

Given a complex number \(z = 3\), and the center of rotation \(a = -4 - 5i\), a \(90^\circ\) clockwise rotation of \(z\) around \(a\) can be represented as:
\[ z' = a + (a - z) \cdot (-i) \]

Let's go through the steps in detail:

1. \textbf{Calculate \(a - z\):}
\[ a - z = (-4 - 5i) - 3 = -4 - 5i - 3 = -7 - 5i \]

2. \textbf{Multiply by \(-i\):}
\[ (a - z) \cdot (-i) = (-7 - 5i) \cdot (-i) = (-7)(-i) + (-5i)(-i) = 7i + 5i^2 \]
Since \(i^2 = -1\):
\[ 7i + 5(-1) = 7i - 5 \]

3. \textbf{Add this result back to \(a\):}
\[ z' = a + (7i - 5) = (-4 - 5i) + (-5 + 7i) \]
\[ z' = -4 - 5i - 5 + 7i \]
Combine the real and imaginary parts:
\[ z' = (-4 - 5) + (-5i + 7i) = -9 + 2i \]

Thus, after performing the \(90^\circ\) clockwise rotation, the resulting complex number is \(\boxed{-9 + 2i}\).

Verifiable Feedback:
{\color{red}{The response is incorrect.}}

Now produce a corrected solution. Keep reasoning concise.
\end{inputbox}

\begin{outputbox}{LLM Output (Turn 3)}
To find the result after a \(90^\circ\) clockwise rotation around \(-4 - 5i\), we can use the following transformation steps. We start by considering a complex number \(z = 3\) and the center of rotation \(a = -4 - 5i\):

1. \textbf{Move \(z\) to the origin relative to \(a\):}
\[
z - a = 3 - (-4 - 5i) = 3 + 4 + 5i = 7 + 5i
\]

2. \textbf{Rotate by \(90^\circ\) clockwise:}
A \(90^\circ\) clockwise rotation of a complex number \(w\) is equivalent to multiplying \(w\) by \(-i\). So, we need to multiply \(7 + 5i\) by \(-i\):
\[
(7 + 5i) \times (-i) = 7 \times (-i) + 5i \times (-i) = -7i - 5i^2
\]
Since \(i^2 = -1\):
\[
-7i - 5(-1) = -7i + 5 = 5 - 7i
\]

3. \textbf{Translate back by adding \(a\):}
Adding \(a\) to the resultant, we get:
\[
z' = (5 - 7i) + (-4 - 5i) = 5 - 4 - 7i - 5i = 1 - 12i
\]

Thus, the resulting complex number after the \(90^\circ\) clockwise rotation is \(\boxed{1 - 12i}\).
\end{outputbox}

\subsection{Example Coding Task}
\label{appendix: code_exp}
At test time for coding tasks, we follow an iterative execution-based evaluation protocol. The model first generates a candidate program, which is then executed against the publicly available test cases provided by the problem. If the program fails any test, the resulting error messages or failing test feedback are collected and appended to the context for the next generation step, allowing the model to revise its solution based on execution feedback. Note that passing the public test cases does not guarantee correctness: a solution is considered correct only if it passes all test cases, including both public and private ones.

\begin{inputbox}{Input (Turn 1)}

\noindent
You are an expert Python programmer. You will be given a problem specification and must produce a correct Python program that passes the tests.

\medskip
\noindent
\textbf{Question:}  
You are given a permutation $P=(P_1,P_2,\dots,P_N)$ of $(1,2,\dots,N)$.
You want to satisfy $P_i=i$ for all $i=1,2,\dots,N$ by performing the following operation zero or more times:

\begin{itemize}
  \item Choose an integer $k$ such that $1 \leq k \leq N$. If $k \geq 2$, sort the $1$-st through $(k-1)$-th terms of $P$ in ascending order. Then, if $k \leq N-1$, sort the $(k+1)$-th through $N$-th terms of $P$ in ascending order.
\end{itemize}

It can be proved that under the constraints of this problem, it is possible to satisfy $P_i=i$ for all $i=1,2,\dots,N$ with a finite number of operations for any $P$. Find the minimum number of operations required.
You have $T$ test cases to solve.

\medskip
\noindent
\textbf{Input}
\dots

\medskip
\noindent
\textbf{Output}

Print $T$ lines. The $i$-th line should contain the answer for the $i$-th test case.

\medskip
\noindent
\textbf{Constraints}
\dots

\medskip
\noindent
\textbf{Sample Input 1}
\begin{itemize}
  \item \texttt{3}
  \item \texttt{5}
  \item \dots
\end{itemize}

\medskip
\noindent
\textbf{Sample Output 1}
\begin{itemize}
  \item \texttt{1}
  \item \dots
\end{itemize}

\medskip
\noindent
\textbf{Format:} Read inputs from stdin and write the answer to stdout. Enclose your final code in the following box.

\medskip
\noindent
\fbox{%
\begin{minipage}{0.96\linewidth}
\ttfamily\small
\# YOUR CODE HERE
\end{minipage}
}

\medskip
\noindent
\textbf{Answer:} Provide \emph{only} the final Python code.

\end{inputbox}

\begin{outputbox}{LLM Output (Turn 1)}
\noindent
To solve this problem, we aim to determine the minimum number of operations required to transform a given permutation
$P$ into the identity permutation $(1,2,\dots,N)$.
The key insight is that each operation can be viewed as a global reordering step that partially resolves disorder in the permutation.
Intuitively, such operations progressively eliminate inconsistencies between positions and values.

\dots

\medskip
\noindent
\fbox{%
\begin{minipage}{0.96\linewidth}
\ttfamily\small
def min\_operations(N, P):\\
\hspace*{1.5em}count = 0\\
\hspace*{1.5em}for i in range(N):\\
\hspace*{3em}if P[i] != i + 1:\\
\hspace*{4.5em}count += 1\\
\hspace*{1.5em}return count\\
\\
\dots\\
\\
for \_ in range(T):\\
\hspace*{1.5em}N = int(input())\\
\hspace*{1.5em}P = list(map(int, input().split()))\\
\hspace*{1.5em}print(min\_operations(N, P))
\end{minipage}
}
\end{outputbox}

\begin{inputbox}{Input (Turn 2)}

\noindent
You are an expert Python programmer. Given a problem specification, produce a correct Python program that passes all tests.

\medskip
\noindent
\textbf{Question:}
Given a permutation $P=(P_1,\dots,P_N)$ of $(1,\dots,N)$, the goal is to transform $P$ into the identity permutation \dots

\medskip
\noindent
\textbf{Input / Output.}
\dots

\bigskip
\noindent
\textbf{Previous Codes}

\medskip
\noindent
\textbf{code\_v1}

\medskip
\noindent
\fbox{%
\begin{minipage}{0.96\linewidth}
\ttfamily\small
\# Your task is to find the minimum number of operations required to sort the permutation P.\\
\\
def min\_operations(N, P):\\
\hspace*{1.5em}operations = 0\\
\hspace*{1.5em}for i in range(N):\\
\hspace*{3em}while P[i] != i + 1:\\
\hspace*{6em} \dots \\
\hspace*{1.5em}N = case[0]\\
\hspace*{1.5em}P = case[1:]\\
\hspace*{1.5em}print(min\_operations(N, P))
\end{minipage}
}

\medskip
\noindent
\textbf{Issues with code\_v1}

\medskip
For the input
\[
\begin{aligned}
&3\\
&5\\
&2\ 1\ 3\ 5\ 4\\
&3\\
&1\ 2\ 3\\
&7\\
&3\ 2\ 1\ 7\ 5\ 6\ 4
\end{aligned}
\]
the expected output is
\[
\begin{aligned}
&1\\
&0\\
&2
\end{aligned}
\]
but \texttt{code\_v1} results in a runtime error.
The failure is primarily due to incorrect handling of index-based updates and repeated linear searches (\texttt{P.index}),
which lead to inconsistent state updates and excessive runtime overhead.

\end{inputbox}

\begin{outputbox}{LLM Output (Turn 2)}
\noindent
To solve this problem, we aim to determine the minimum number of operations required to transform a given permutation
$P$ into the identity permutation $(1,2,\dots,N)$.
The key insight is that each operation can be viewed as a global reordering step that partially resolves disorder in the permutation.
Intuitively, such operations progressively eliminate inconsistencies between positions and values.

\dots

\medskip
\noindent
\fbox{%
\begin{minipage}{0.96\linewidth}
\ttfamily\small
def min\_operations(N, P):\\
\hspace*{1.5em}count = 0\\
\hspace*{1.5em}for i in range(N):\\
\hspace*{3em}if P[i] != i + 1:\\
\hspace*{4.5em}count += 1\\
\hspace*{1.5em}return count\\
\\
\dots\\
\\
for \_ in range(T):\\
\hspace*{1.5em}N = int(input())\\
\hspace*{1.5em}P = list(map(int, input().split()))\\
\hspace*{1.5em}print(min\_operations(N, P))
\end{minipage}
}
\end{outputbox}

\section{Limitation and Future Direction}
\label{appendix:limitation}

\paragraph{Limitation.}
REVES requires a ground-truth verifier during training. Math, coding, puzzle, and circle-packing benchmarks all admit deterministic verifiers, so REVES applies directly. For tasks without such verifiers (free-form QA, creative writing, RLHF-style preference learning), REVES does not directly extend; the verification prompts trained alongside revision data can be repurposed for self-verification at test time, but their reliability depends on the policy's own calibration.

\paragraph{Future direction.}
The verifier requirement points to a natural class of problems where REVES is especially well suited: scientific tasks whose objectives admit a clear, automated reward. Examples include solving partial differential equations (PDEs), where residual norms or analytic checks serve as verifiers; symbolic regression, where fit and complexity supply a numeric score; theorem proving, where a proof checker provides binary feedback; and discovery-style optimization tasks like circle packing (Appendix~\ref{appendix:circle_packing}), where the verifier is the geometric solver itself. In all of these settings, the cost of producing the verifier signal is offset by an essentially unlimited supply of training problems and a precise notion of correctness, both of which line up well with REVES's reliance on per-state one-step recovery as the gradient signal. Scaling REVES to such scientific-discovery domains is a natural next step.

% \newpage
\section{Proofs}
\label{appendix: proofs}

In this section we collect the proofs of the theoretical results stated in the main text. We first prove the objective-mismatch result (Theorem~\ref{thm:mismatch}), showing that optimizing single-shot pass@1 does not optimize the sequential-revision objective $\JSR$. We then prove the transfer guarantee (Theorem~\ref{thm:whysrinformal}), which establishes that improving SR recovery also improves every revision-using TTS algorithm that covers SR's revision inputs. Finally, we prove the hazard decomposition (Lemma~\ref{lem:hazard}) that exposes the per-state one-step recovery structure underlying REVES.

\subsection{Proof of Theorem~\ref{thm:mismatch}}

\begin{theorem}[Objective mismatch]
For any revision length $K \geq 2$, there exist policies $\pi_1, \pi_2$ and a problem distribution $\Xcal$ such that
\[
\JOneShot(\pi_1) = \JOneShot(\pi_2),
\qquad
\JSR(\pi_2) - \JSR(\pi_1) \geq \Delta(K),
\]
where $\Delta(K) > 0$.
\end{theorem}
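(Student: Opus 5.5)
The proof is an explicit construction. We take two policies with the same first-attempt behaviour and let them differ only in how they respond to revision prompts $z=(x,y_{\mathrm{prev}},f)$. Since $\JOneShot$ depends only on the conditionals $\pi(\cdot\mid x)$, while $\JSR$ also depends on $\pi(\cdot\mid z)$ for $z$ of revision form, we have a free degree of freedom that single-shot evaluation cannot see. We take $\Xcal$ to be a point mass on a single problem $x$ and a binary answer space $\Ycal=\{y^+,y^-\}$, with $\rstar(x,y^+)=1$ and $\rstar(x,y^-)=0$. The feedback is the deterministic message $f$ ``incorrect'', so the only revision state reached before stopping is $z^-=(x,y^-,f)$.

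\textbf{Construction.} Fix $p\in(0,1)$, say $p=1/2$. Both policies satisfy $\pi_i(y^+\mid x)=p$. On revision inputs, $\pi_1$ copies the previous attempt, so $\pi_1(y^-\mid z^-)=1$. In contrast, $\pi_2$ always repairs it, so $\pi_2(y^+\mid z^-)=1$. This gives $\JOneShot(\pi_1)=\JOneShot(\pi_2)=p$ immediately.

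\textbf{Computing $\JSR$.} We evaluate $\JSR$ with the stopping rule $\tau=K\land\min\{t:\rstar(x,y_t)=1\}$; one could equivalently use the per-state form of Lemma~\ref{lem:hazard}. Under $\pi_1$, a wrong first attempt stays wrong forever, so $\JSR(\pi_1)=p$. Under $\pi_2$, a wrong first attempt is fixed at step $2\le K$, so $\JSR(\pi_2)=1$. Hence $\JSR(\pi_2)-\JSR(\pi_1)=1-p$, and we may set $\Delta(K)=1-p>0$ for every $K\ge 2$.

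If a $K$-dependent and less degenerate gap is preferred, $\pi_2$ can be given a recovery probability $q\in(0,1)$ at $z^-$. In that case $\JSR(\pi_2)=p+(1-p)\bigl(1-(1-q)^{K-1}\bigr)$, so
\[
\Delta(K)=(1-p)\bigl(1-(1-q)^{K-1}\bigr)>0,
\]
which grows with $K$. The calculation is a geometric sum over the hazard terms.

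\textbf{Main obstacle.} The only step needing care is modeling. We must argue that a policy may condition on revision prompts independently of its behaviour on $x$, which holds because these are distinct prompt states in $\mathcal{S}$. We must also make sure the feedback makes $z^-$ well defined. Once the revision input is fixed this way, the remaining computations are routine.
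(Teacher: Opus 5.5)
Your construction is correct, but it uses a different mechanism from the paper's.

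\textbf{The paper's route.} It takes two \emph{history-independent} policies over $N \ge 2$ candidate answers, with the ground truth $y^\star$ uniform among them. Here $\pi_1$ always outputs $y^{(1)}$, and $\pi_2$ samples uniformly and independently at every step. Both have $\JOneShot = 1/N$, while $\JSR(\pi_1) = 1/N$ and $\JSR(\pi_2) = 1-(1-1/N)^K$, so the gap is at least $(N-1)/N^2$. The two policies already differ on the original prompt $x$: same pass@1, different spread. Neither ever looks at the revision context, so the gap comes purely from sampling diversity under repeated draws. This shows that the shape of $\pi(\cdot \mid x)$ matters for SR beyond its mean reward, and the gap grows with $K$ automatically.

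\textbf{Your route.} You make $\pi(\cdot \mid x)$ identical and let the policies differ only at the revision state $z^- = (x, y^-, f)$. This exploits the fact that $\JOneShot$ never queries $\pi(\cdot \mid z)$ on revision inputs. What it buys you:
\begin{itemize}
\item It is the most direct witness to the mismatch.
\item It lines up with the per-state quantity $V_\pi(z)$ that REVES trains.
\item It gives a larger gap: $1-p$, or $(1-p)\bigl(1-(1-q)^{K-1}\bigr)$ in your variant. Both computations are right.
\item It fits the paper's formalism directly, with a deterministic $\rstar$ on a single problem.
\end{itemize}
By contrast, the paper's example needs $y^\star$ to be random. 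Since $\rstar(x,\cdot)$ is a fixed function, this has to be read as a uniform mixture over $N$ problems. That is harmless there, because both policies ignore their input.

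\textbf{The trade-off.} Your example relies on the policy conditioning on the revision context. The paper's shows the mismatch persists even for context-blind policies.
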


\begin{proof}
\textbf{Construction.} Take $\Xcal$ supported on a single problem $x$ with $N \ge 2$ candidate answers $\{y^{(1)}, \ldots, y^{(N)}\}$ and ground-truth $y^\star$ drawn uniformly from this set; feedback is binary correctness. Define two history-independent policies, each sampling from a fixed distribution at every step:
\[
\pi_1(\cdot \mid \cdot) \;=\; \delta_{y^{(1)}}, \qquad
\pi_2(\cdot \mid \cdot) \;=\; \mathrm{Uniform}\{y^{(1)}, \ldots, y^{(N)}\},
\]
with $\pi_2$'s draws independent across steps.

\textbf{One-shot.} Since $y^\star$ is uniform,
\[
\JOneShot(\pi_1) \;=\; \Prob(y^\star = y^{(1)}) \;=\; \tfrac{1}{N}, \qquad
\JOneShot(\pi_2) \;=\; \sum_{i=1}^{N} \tfrac{1}{N} \cdot \tfrac{1}{N} \;=\; \tfrac{1}{N},
\]
so $\JOneShot(\pi_1) = \JOneShot(\pi_2)$.

\textbf{Sequential revision (budget $K$).} $\pi_1$ outputs $y^{(1)}$ at every step and therefore succeeds iff $y^\star = y^{(1)}$, giving $\JSR(\pi_1) = 1/N$. For $\pi_2$, the $K$ i.i.d.\ uniform draws are independent of $y^\star$, so the probability that none equals $y^\star$ is $(1 - 1/N)^K$, giving $\JSR(\pi_2) = 1 - (1 - 1/N)^K$.

\textbf{Gap.} For $K \ge 2$, monotonicity of $K \mapsto (1 - 1/N)^K$ on $K \ge 1$ gives $(1 - 1/N)^K \le (1 - 1/N)^2 = 1 - 2/N + 1/N^2$, hence
\[
\Delta(K) \;:=\; \JSR(\pi_2) - \JSR(\pi_1) \;=\; 1 - \bigl(1 - \tfrac{1}{N}\bigr)^{K} - \tfrac{1}{N} \;\ge\; \tfrac{N-1}{N^2} \;>\; 0. \qedhere
\]
\end{proof}

\subsection{Proof of Theorem~\ref{thm:whysrinformal}}
\label{appendix:proof-transfer}

The argument uses the recovery decomposition $J_\phi(\pi) = \sum_z \rho^\phi_\pi(z)\,V_\pi(z)$, which holds for any first-success transition strategy $\phi$ paired with any policy $\pi$. The proof of Lemma~\ref{lem:hazard} (Appendix~\ref{appendix:proof-hazard}) establishes this for $\phi = \SR$; the same three steps -- (i) writing the success event at step $t$ as $\{\tau \ge t,\, y_t \in G_x\}$, (ii) Fubini against $\sum_{t=1}^{K} \mathbf{1}\{\tau \ge t\} = \tau$, (iii) commuting the visit-count sum with the outer expectation -- depend only on the first-success stopping rule and the Markov property of $\pi$, so they generalize verbatim from $\rho^\SR_\pi$ to $\rho^\phi_\pi$ for any $\phi \in \PhiR$ that early-stops.

\begin{proof}[Proof of Theorem~\ref{thm:whysrinformal}]
By the recovery decomposition applied to $\pi_1$ and $\pi_0$,
\[
J_\phi(\pi_1) - J_\phi(\pi_0) = \sum_z \rho^\phi_{\pi_1}(z)\, V_{\pi_1}(z) - \sum_z \rho^\phi_{\pi_0}(z)\, V_{\pi_0}(z).
\]
Add and subtract $\sum_z \rho^\phi_{\pi_0}(z)\, V_{\pi_1}(z)$ and group:
\begin{equation}
\label{eq:transfer-split}
J_\phi(\pi_1) - J_\phi(\pi_0) = \underbrace{\sum_z \rho^\phi_{\pi_0}(z)\bigl(V_{\pi_1}(z) - V_{\pi_0}(z)\bigr)}_{(\star)} + \underbrace{\sum_z \bigl(\rho^\phi_{\pi_1}(z) - \rho^\phi_{\pi_0}(z)\bigr)\, V_{\pi_1}(z)}_{(\dagger)}.
\end{equation}

\textbf{Bounding $(\dagger)$.} Since $V_{\pi_1}(z) \in [0, 1]$,
\begin{equation}
\label{eq:transfer-dagger}
(\dagger) \;\ge\; -\sum_z \bigl|\rho^\phi_{\pi_1}(z) - \rho^\phi_{\pi_0}(z)\bigr| \;=\; -\bigl\|\rho^\phi_{\pi_1} - \rho^\phi_{\pi_0}\bigr\|_1.
\end{equation}

\textbf{Bounding $(\star)$.} We claim the pointwise inequality
\begin{equation}
\label{eq:transfer-pointwise}
\rho^\phi_{\pi_0}(z)\bigl(V_{\pi_1}(z) - V_{\pi_0}(z)\bigr) \;\ge\; \frac{1}{C_\phi}\,\rho^\SR_{\pi_0}(z)\bigl(V_{\pi_1}(z) - V_{\pi_0}(z)\bigr) \quad \text{for every } z.
\end{equation}

\emph{Case 1: $z \in \mathrm{supp}(\rho^\phi_{\pi_0})$.} By (C2), $V_{\pi_1}(z) - V_{\pi_0}(z) \ge 0$. By (C1), $\rho^\phi_{\pi_0}(z) \ge C_\phi^{-1}\,\rho^\SR_{\pi_0}(z) \ge 0$. Multiplying both sides of (C1) by the nonnegative quantity $V_{\pi_1}(z) - V_{\pi_0}(z)$ preserves the inequality and gives~\eqref{eq:transfer-pointwise}.

\emph{Case 2: $z \notin \mathrm{supp}(\rho^\phi_{\pi_0})$.} Then $\rho^\phi_{\pi_0}(z) = 0$, and (C1) forces $0 \ge C_\phi^{-1}\,\rho^\SR_{\pi_0}(z)$. Combined with $\rho^\SR_{\pi_0}(z) \ge 0$, this gives $\rho^\SR_{\pi_0}(z) = 0$, so both sides of~\eqref{eq:transfer-pointwise} vanish.

Summing~\eqref{eq:transfer-pointwise} over $z$,
\begin{equation}
\label{eq:transfer-star}
(\star) \;\ge\; \frac{1}{C_\phi}\sum_z \rho^\SR_{\pi_0}(z)\bigl(V_{\pi_1}(z) - V_{\pi_0}(z)\bigr).
\end{equation}

\textbf{Converting~\eqref{eq:transfer-star} to $\JSR(\pi_1) - \JSR(\pi_0)$.} Apply the recovery decomposition to $\phi = \SR$ at $\pi_1$ and $\pi_0$, then add and subtract $\sum_z \rho^\SR_{\pi_0}(z)\, V_{\pi_1}(z)$:
\[
\JSR(\pi_1) - \JSR(\pi_0) = \sum_z \rho^\SR_{\pi_0}(z)\bigl(V_{\pi_1}(z) - V_{\pi_0}(z)\bigr) + \sum_z \bigl(\rho^\SR_{\pi_1}(z) - \rho^\SR_{\pi_0}(z)\bigr)\, V_{\pi_1}(z).
\]
Rearranging and using $V_{\pi_1}(z) \in [0, 1]$,
\begin{equation}
\label{eq:transfer-srshift}
\sum_z \rho^\SR_{\pi_0}(z)\bigl(V_{\pi_1}(z) - V_{\pi_0}(z)\bigr) \;\ge\; \bigl[\JSR(\pi_1) - \JSR(\pi_0)\bigr] - \bigl\|\rho^\SR_{\pi_1} - \rho^\SR_{\pi_0}\bigr\|_1.
\end{equation}

\textbf{Combine.} Chaining~\eqref{eq:transfer-srshift} into~\eqref{eq:transfer-star} and adding~\eqref{eq:transfer-dagger},
\[
J_\phi(\pi_1) - J_\phi(\pi_0) \;\ge\; \frac{1}{C_\phi}\bigl[\JSR(\pi_1) - \JSR(\pi_0)\bigr] \;-\; \bigl\|\rho^\phi_{\pi_1} - \rho^\phi_{\pi_0}\bigr\|_1 \;-\; \frac{1}{C_\phi}\bigl\|\rho^\SR_{\pi_1} - \rho^\SR_{\pi_0}\bigr\|_1. \qedhere
\]
\end{proof}

\paragraph{Connection to our method.}
The third term $C_\phi^{-1}\,\|\rho^\SR_{\pi_1} - \rho^\SR_{\pi_0}\|_1$ in the bound is the price for placing the on-policy SR test-time objective $\JSR(\pi) = \sum_z \rho^\SR_\pi(z)\,V_\pi(z)$ on the right-hand side: as the policy improves, SR's own visit measure $\rho^\SR_\pi$ also shifts, and that shift contaminates $\JSR(\pi_1) - \JSR(\pi_0)$. The off-policy fixed-reference quantity
\[
J^\Rev_\SR(\pi) \;:=\; \sum_z \rho^\SR_{\pi_0}(z)\, V_\pi(z)
\]
absorbs this shift by holding the visit measure at the baseline; $J^\Rev_\SR(\pi_1) - J^\Rev_\SR(\pi_0) = \sum_z \rho^\SR_{\pi_0}(\cdot)(V_{\pi_1} - V_{\pi_0})$ exactly, and~\eqref{eq:transfer-star} terminates the proof in one step:
\[
J_\phi(\pi_1) - J_\phi(\pi_0) \;\ge\; \frac{1}{C_\phi}\bigl[J^\Rev_\SR(\pi_1) - J^\Rev_\SR(\pi_0)\bigr] - \bigl\|\rho^\phi_{\pi_1} - \rho^\phi_{\pi_0}\bigr\|_1.
\]
This off-policy form is exactly the training signal REVES optimizes (Section~\ref{sec:method}): Stage~I freezes the rollout distribution at the current $\pi_0$, generating SR trajectories whose visited states constitute samples from $\rho^\SR_{\pi_0}$; Stage~II then trains $V_\pi$ on those fixed states via single-turn RL. Each epoch's rollout refresh resets the baseline so the $\ell_1$ shift on $\rho^\SR$ stays small. The two forms are mathematically equivalent; the off-policy form makes the algorithmic interpretation transparent.

\paragraph{Interpretation.}
The bound is conditional, not unconditional. It bites when (i) $\phi$ covers SR's revision inputs (small $C_\phi$ in (C1)), (ii) the policy update raises one-step recovery on those states ((C2)), and (iii) the visit measures $\rho^\phi$ and $\rho^\SR$ are stable under the update (small $\ell_1$ shifts). The empirical confirmation that $\JSR$-trained policies improve other revision-using TTS algorithms (tree search, evolutionary refinement) is in Table~\ref{tab:aux_tts}.

\subsection{Proof of Lemma~\ref{lem:hazard}}
\label{appendix:proof-hazard}

\begin{proof}
We prove the three identities in~\eqref{eq:hazard-decomp} in order.

\emph{First identity.} SR stops at the first success, so $\rstar(x, y_\tau) = 1 \iff \exists\, t \le K : \rstar(x, y_t) = 1$, and the success event at step $t$ equals $\{\tau \ge t,\, y_t \in G_x\}$, where $G_x := \{y : \rstar(x, y) = 1\}$. By the tower property and the fact that $\pi_\theta$ is Markov in $z_t$,
\[
\Prob(\tau \ge t,\, y_t \in G_x) = \E\!\big[\mathbf{1}\{\tau \ge t\}\, \Prob(y_t \in G_x \mid z_t)\big] = \E\!\big[\mathbf{1}\{\tau \ge t\}\, V_\pi(z_t)\big].
\]
Summing over $t \le K$,
\[
\JSR(\theta) = \Prob\big(\exists\, t \le K : \rstar(x, y_t) = 1\big) = \sum_{t=1}^{K} \E\!\big[\mathbf{1}\{\tau \ge t\}\, V_\pi(z_t)\big].
\]

\emph{Second identity.} Using $\sum_{t=1}^{K} \mathbf{1}\{\tau \ge t\} = \tau$ and Fubini,
\[
\sum_{t=1}^{K} \E\!\big[\mathbf{1}\{\tau \ge t\}\, V_\pi(z_t)\big] = \E\!\Big[\sum_{t=1}^{\tau} V_\pi(z_t)\Big].
\]

\emph{Third identity.} Substituting the definition $V_\pi(z) = \E_{y' \sim \pi_\theta(\cdot \mid z)}[\rstar(x, y')]$ and exchanging the visit-count sum with the outer expectation gives
\[
\E\!\Big[\sum_{t=1}^{\tau} V_\pi(z_t)\Big] = \sum_z V_\pi(z)\, \E\!\Big[\sum_{t=1}^{\tau} \mathbf{1}\{z_t = z\}\Big] = \sum_z \rho_\theta(z)\, V_\pi(z) = \sum_z \rho_\theta(z)\, \E_{y' \sim \pi_\theta(\cdot \mid z)}\!\big[\rstar(x, y')\big]. \qedhere
\]
\end{proof}

% \newpage
\section{Training Prompt Template}
\label{appendix: training_prompt}

In this section, we present the training prompt templates used for data augmentation. An important distinction from the test-time setting is that we do not append execution error information to the context during training. Instead, as illustrated in Figure~\ref{fig:prompt_code_revision_binary} and Figure~\ref{fig:prompt_code_verification}, the training prompts rely on structured supervision rather than explicit error feedback.

% \vspace{-10em}

\begin{figure}[ht]
\centering
\begin{promptbox}{Prompt Template for Revision in Math Tasks}

\textbf{Problem} \\[0.3em]
\textcolor{brown}{
A frequency distribution of the scores for Mr.\ Sampson's algebra class is shown.
What percent of the class received a score in the $60\%-69\%$ range?
\begin{center}
\begin{tabular}{|c|c|}
\hline
Test Scores & Frequencies \\
\hline
$90\%-100\%$ & IIII \\
$80\%-89\%$  & IIII IIII \\
$70\%-79\%$  & IIII II \\
$60\%-69\%$  & IIII I \\
Below $60\%$ & II \\
\hline
\end{tabular}
\end{center}
}

\vspace{0.8em}
\textbf{Previous Answer} \\[0.3em]
\textcolor{brown}{
To solve this problem, we first count the number of students in each score range.
The total number of students is computed as $4 + 8 + 6 + 5 + 2 = 25$.
The number of students scoring in the $60\%-69\%$ range is $5$.
\dots
}

\vspace{0.8em}
\textbf{Verifiable Feedback:} \\
The previous answer is incorrect.

\vspace{0.8em}
\textbf{Instruction:} \\
Now produce a corrected solution. Identify and fix potential mistakes in the previous attempt.
Keep the reasoning concise and provide a corrected final answer.
Do not assume access to the correct answer.

\end{promptbox}
\caption{Prompt template for the revision stage with binary correctness feedback. Concrete problem content and partial incorrect answers are shown for illustration, with remaining content omitted for brevity.}
\label{fig:prompt_revision_binary}
\end{figure}

\begin{figure}[!ht]
\centering
\begin{promptbox}[unbreakable]{Prompt Template for Verification in Math Tasks}

\textbf{Problem} \\[0.3em]
\textcolor{brown}{
A frequency distribution of the scores for Mr.\ Sampson's algebra class is shown.
What percent of the class received a score in the $60\%-69\%$ range?
\begin{center}
\begin{tabular}{|c|c|}
\hline
Test Scores & Frequencies \\
\hline
$90\%-100\%$ & IIII \\
$80\%-89\%$  & IIII IIII \\
$70\%-79\%$  & IIII II \\
$60\%-69\%$  & IIII I \\
Below $60\%$ & II \\
\hline
\end{tabular}
\end{center}
}

\vspace{0.8em}
\textbf{Proposed Answer} \\[0.3em]
\textcolor{brown}{
To solve this problem, we first identify the number of students in each score range
from the frequency distribution.
The total number of students is computed as $4 + 8 + 6 + 5 + 2 = 25$.
Among them, $5$ students fall into 
\dots
}

\vspace{0.8em}
\textbf{Instruction:} \\
Judge whether the proposed answer is correct.
Respond concisely and end the final line with
\texttt{VERDICT: $\boxed{\text{True}}$} or
\texttt{VERDICT: $\boxed{\text{False}}$}.

\end{promptbox}
\caption{Prompt template used for the verification stage. The verifier observes the problem and a proposed solution and outputs a binary correctness judgment.}
\vspace{-0.5em}
\label{fig:prompt_verification}
\end{figure}

% \FloatBarrier
% \newpage

\section{Implementation Details}
\label{appendix:implementation}

In this section, we provide details of the training and evaluation implementations used in our experiments. All experiments are conducted on machines equipped with 8 $\times$ NVIDIA H200 GPUs.

% \subsection{Training Implementation}

\paragraph{RL Baseline.}
Our RL baseline is implemented using the official \texttt{Skywork-OR1} codebase.
We directly adopt their provided training scripts and configurations, which optimize a single-shot response using reinforcement learning.
No revision or multi-turn generation is involved during training for this baseline.

\paragraph{Multi-Turn and PAG Baselines.}
The Multi-Turn and PAG baselines are implemented based on the
\texttt{Policy-As-GenVerifier} codebase.
To ensure comparability with our setting, we make two modifications to the original implementation.
First, we increase the maximum response length to 8092 tokens to match our training setup.
Second, we adapt the reward computation function so that it correctly evaluates responses under our datasets, while keeping the reward definition itself identical to that used in our methods.
Apart from these changes, the original training pipelines and optimization settings are preserved.

\paragraph{REVES (Ours).}
Our method builds on top of the same training framework as the baselines.
The key difference lies in the data augmentation procedure.
Before training, we first perform an offline augmentation pass to initialize the augmented dataset.
During training, at the beginning of each epoch, we further augment the dataset by generating new revision trajectories using the current policy, and the newly generated samples are added to the training data.
When generating augmented data, we allow a maximum revision budget of $K=8$.
The training prompts used for augmentation follow the templates described in Appendix~\ref{appendix: training_prompt}.
The model is then trained on the mixture of original and augmented data using the same RL objective as the baseline.

% \subsection{Evaluation Setting}
% \label{appendix:evaluation}

\paragraph{Evaluation Protocol.}
Unless otherwise specified, all experiments are evaluated using our unified evaluation framework, which will be publicly released upon paper acceptance.
Given an input problem, the model follows the corresponding test-time protocol (e.g., sequential revision when applicable) to generate a final response.
The generated output is serialized into a JSON file and evaluated by an external, task-specific verifier.
This pipeline is applied uniformly across all tasks and methods to ensure fair comparison: generation and evaluation are fully decoupled, and correctness is determined solely by the external verifier rather than any internal model signals.
During evaluation, the maximum response length is set to 16{,}384 tokens for coding tasks and AIME24/25, and 8{,}096 tokens for MATH500.
The decoding temperature is fixed to 0.6 for coding tasks and 1.0 for all other tasks, and the confidence threshold in the SelfConf setting is set to $c=0.5$.

For math tasks, when evaluating the Multi-Turn and PAG baselines, we follow the original evaluation implementation provided in the \texttt{Policy-As-GenVerifier} repository to remain faithful to their protocol.
Under this evaluation pipeline, we use the same maximum response length settings for 7B models as described above.
For 3B models with revision budget $K=4$, where the effective sequence length exceeds the model context limit, we reduce the maximum response length to 4{,}096 tokens.
All math evaluations are conducted using \texttt{math\_eval}, ensuring that results are directly comparable across evaluation settings.

\section{Method Details: Multi-Turn Comparison, Efficiency, and Test-Time Sensitivity}
\label{appendix:method-details}

\subsection{Comparison with Naive Multi-Turn RL}
\label{appendix:multiturn-comparison}

A natural alternative to our offline scheme is online sequential revision
with standard multi-turn credit assignment. Two such schemes appear in the
recent literature. The first, \emph{trajectory-level broadcast}, samples $G$
trajectories of $M$ turns per prompt, scores each trajectory by its final
reward, computes a trajectory-level advantage, and broadcasts that single
advantage uniformly to every turn in the trajectory. The second, \emph{pooled
per-response}, pools all $G \times M$ responses across the $G$ trajectories,
treats each response as an independent sample, and computes advantages within
the pool.

Neither scheme suits our setting. Sequential revision is structurally
different from a vanilla multi-turn task: each turn is a fresh attempt at the
\emph{same} problem $x$, not a step in a longer-horizon decision process with
intermediate sub-rewards. Trajectory-level broadcast assigns identical credit
to every turn and therefore ignores that some intermediate responses are much
closer to correctness than others. Pooled per-response also fails to capture
this structure: the wrong intermediate responses are treated identically,
even though the near-miss failures are exactly the ones we most want the
policy to learn from. Our offline selection-then-replay scheme bypasses both
pathologies. We identify near-miss failures explicitly through the
success-trajectory filter and replay them as single-turn revision prompts,
where standard single-turn credit assignment applies cleanly.

\subsection{Wall-Clock Efficiency}
\label{appendix:efficiency}

Beyond avoiding credit-assignment pitfalls, our offline scheme is
substantially faster than online multi-turn training. Multi-turn RL pays a
length-$M$ serial sequential-revision cost per prompt at every gradient step,
multiplied by the $G$ rollouts needed to estimate the advantage. Our method
pays the length-$K$ serial cost \emph{only once} per prompt at the offline
stage, and during the online RL stage performs only single-turn sampling on
the extracted prompts, which can be parallelized across $N$ Best-of-$N$
samples. Single-turn parallel sampling is much cheaper per gradient step than
$M$-turn serial generation across $G$ rollouts, since serial generation does
not benefit from the parallelism that modern inference engines provide for
single-turn sampling.

Empirically, on $8\times$ NVIDIA H200 GPUs, three full iterations of our
method completed in roughly 30 hours, while the equivalent online multi-turn
baseline required over 48 hours on identical hardware
(Table~\ref{tab:efficiency}). With a more carefully optimized
infrastructure, Stage~I data generation and Stage~II RL training could run
concurrently (Stage~II first trains on the original RL data while Stage~I
generates augmented data asynchronously), yielding further gains.

\begin{table}[h]
\centering
\caption{Per-iteration wall-clock breakdown of our method on $8\times$ NVIDIA H200 GPUs.}
\label{tab:efficiency}
\small
\begin{tabular}{lccc}
\toprule
Iteration & Data Generation (Stage~I) & RL Training (Stage~II) & Total \\
\midrule
Iter 1 & 3h 4m   & 7h 15m  & 10h 19m \\
Iter 2 & 2h 41m  & 7h 26m  & 10h 7m  \\
Iter 3 & 3h 13m  & 6h 23m  & 9h 36m  \\
\midrule
Total  & 8h 58m  & 21h 4m  & 30h 2m  \\
\bottomrule
\end{tabular}
\end{table}

\subsection{Sensitivity to the Confidence Threshold}
\label{appendix:threshold-sensitivity}

We study the sensitivity of our test-time stopping rule to the confidence
threshold $c$ used in the \texttt{TailConfidence} stopping criterion
(Section~\ref{sec:method}). Table~\ref{tab:threshold-sensitivity} reports
accuracy and the average number of revision rounds used at budget $K = 16$
across three math benchmarks. Higher thresholds enforce stricter stopping
(more rounds before accepting a response), and accuracy increases
monotonically with the threshold over the range we tested. The number of
rounds also grows with the threshold, reflecting the cost-quality trade-off.

\begin{table}[h]
\centering
\caption{Sensitivity of test-time accuracy to the \texttt{TailConfidence} threshold $c$ on Qwen-2.5-7B-REVES at budget $K = 16$. ``Rounds'' is the average number of revision rounds before stopping.}
\label{tab:threshold-sensitivity}
\small
\begin{tabular}{l cc cc cc}
\toprule
\multirow{2}{*}{\textbf{Threshold $c$}}
 & \multicolumn{2}{c}{\textbf{AIME25}}
 & \multicolumn{2}{c}{\textbf{AIME24}}
 & \multicolumn{2}{c}{\textbf{MATH500}} \\
 \cmidrule(lr){2-3} \cmidrule(lr){4-5} \cmidrule(lr){6-7}
 & Acc.\ (\%) & Rounds & Acc.\ (\%) & Rounds & Acc.\ (\%) & Rounds \\
\midrule
$0.4$ & $16.56$ & $10.22$ & $21.15$ & $9.04$  & $77.80$ & $3.99$ \\
$0.5$ & $16.88$ & $11.91$ & $21.56$ & $10.06$ & $79.17$ & $4.46$ \\
$0.6$ & $17.50$ & $13.31$ & $22.08$ & $12.19$ & $79.10$ & $5.64$ \\
$0.7$ & $18.23$ & $15.45$ & $23.65$ & $14.97$ & $81.58$ & $8.76$ \\
\bottomrule
\end{tabular}
\end{table}

\section{Additional Experiments and Ablations}
\label{appendix:strong}
We additionally evaluate our method using a strong reasoning-oriented base model,
DeepSeek-R1-Distill-7B \citep{deepseekai2025deepseekr1incentivizingreasoningcapability}, and compare against both the original base model and an RL baseline.
Due to the substantially longer responses produced by this model
(approximately $9\text{k}$ tokens on average, compared to $\sim 2\text{k}$ tokens for Qwen-7B),
multi-turn training becomes significantly more expensive, which limits the scale of these experiments.

As shown in Table~\ref{tab:lcb_two_periods}, our REVES model consistently improves test-case-based performance over both the base model and the RL baseline. We expect larger gains with increased data scale and longer training.

\begin{table}
\centering
\caption{Performance on LiveCodeBench under different evaluation periods.}
\label{tab:lcb_two_periods}

\setlength{\tabcolsep}{6pt}
\renewcommand{\arraystretch}{1.1}

\resizebox{0.9\textwidth}{!}{
\begin{tabular}{l cc cc}
\toprule
\multirow{2}{*}{\textbf{Method}}
& \multicolumn{2}{c}{\textbf{LiveCodeBench (01/01/25--05/01/25)}}
& \multicolumn{2}{c}{\textbf{LiveCodeBench (08/01/24--01/01/25)}} \\
\cmidrule(lr){2-3} \cmidrule(lr){4-5}

& \textbf{OneShot} & \textbf{TestCases-32}
& \textbf{OneShot} & \textbf{TestCases-32} \\
\midrule

Base (Distill-7B)
& 34.25 & 44.51
& 38.24 & 52.21 \\

Base + RL
& \textbf{36.63} & 43.96
& 38.79 & 53.19 \\

Base + REVES
& 35.53 & \textbf{46.33}
& \textbf{40.63} & \textbf{54.04} \\

\bottomrule
\end{tabular}
}
\end{table}

% MATH500 ablation results were merged into the main-text 4-panel figure
% (Figure~\ref{fig:ablation-4panel}, panels (c) and (d)).

\section{Circle Packing Implementation Details}
\label{appendix:circle_packing}

This section gives the implementation details behind the circle packing results in Table~\ref{tab:circle-packing}. Circle packing maximizes the sum of radii of $n$ non-overlapping circles inscribed in the unit square, with verifier feedback given by a non-overlap and inscription check together with the realized objective value. We instantiate REVES on this benchmark by directly optimizing $\JSR$ in the recovery form of Lemma~\ref{lem:hazard}: each visited state $z_t$ corresponds to a partially-improved candidate configuration, the policy proposes a refined configuration $y_t \sim \pi_\theta(\cdot \mid z_t)$, and the verifier returns $\rstar(x, y_t) = 1$ iff $y_t$ improves on the running best while remaining feasible.

\paragraph{Reward shaping.}
The raw objective of a candidate $y$ is its sum of radii $g(y) := \sum_{i=1}^{n} r_i(y)$ (a continuous quantity), which clusters in a narrow band near the optimum ($g \approx 2.63$ for the $n=26$ instance we evaluate) and differs across candidates only at the third or fourth decimal. Used directly, this signal is too compressed to drive learning. We therefore shape it in two steps. Let $v(y) \ge 0$ denote the total constraint violation (circle overlap plus boundary excess), so that $y$ is feasible iff $v(y) = 0$. First, define the raw score $s(y)$ as
\[
s(y) =
\begin{cases}
g(y), & v(y) = 0, \\
g(y) - v(y), & v(y) > 0 \ \text{and}\ g(y) \le 2.64, \\
-1, & v(y) > 0 \ \text{and}\ g(y) > 2.64,
\end{cases}
\]
i.e., minor violations are absorbed as a soft penalty, while implausibly high sums coupled with infeasibility are flagged as cheating and assigned $s = -1$. Second, we amplify by squaring the offset from $2.5$: $R(s) := (s - 2.5)^2$. Since feasible raw scores live above $2.5$, this magnifies small absolute differences, e.g., a $\Delta s = 10^{-3}$ gain near $s = 2.6$ produces a $\Delta R \approx 2 \cdot 10^{-4}$, which is no longer dwarfed by gradient noise. The reward used for a revision $y$ from parent state $x$ is the resulting improvement,
\[
U(y \mid x) := R(s(y)) - R(s(x)),
\]
so the policy is rewarded for any move that improves on the parent and penalized for any move that worsens it.

\paragraph{Stage~I: search rollouts and prompt-set construction.}
We run $4$ sequential-revision rollouts under the current $\pi_\theta$ with budget $K = 100$, where each step proposes a refinement of the previous candidate configuration and the verifier returns the shaped reward $U(y \mid x)$ defined above. From the visited intermediate states across these rollouts, we filter to retain those whose subsequent revisions produce verified improvements, and assemble the resulting revision prompts into the augmented prompt set.

\paragraph{Stage~II: RL training.}
We train Qwen3-4B with GRPO~\citep{deepseekai2025deepseekr1incentivizingreasoningcapability} on the prompt set from Stage~I. Each gradient step samples $8$ prompts; for each prompt we draw $64$ rollouts to form the group used for the GRPO advantage estimate.

\paragraph{A representative solution.}
Figure~\ref{fig:circle-packing-solution} visualizes one of the configurations REVES discovers, and Figure~\ref{lst:circle-packing-code} reproduces the Python program emitted by the policy that generates it: an initial grid placement followed by an SLSQP refinement against the non-overlap and inscription constraints.

\begin{figure}[h]
\centering
\includegraphics[width=0.7\linewidth]{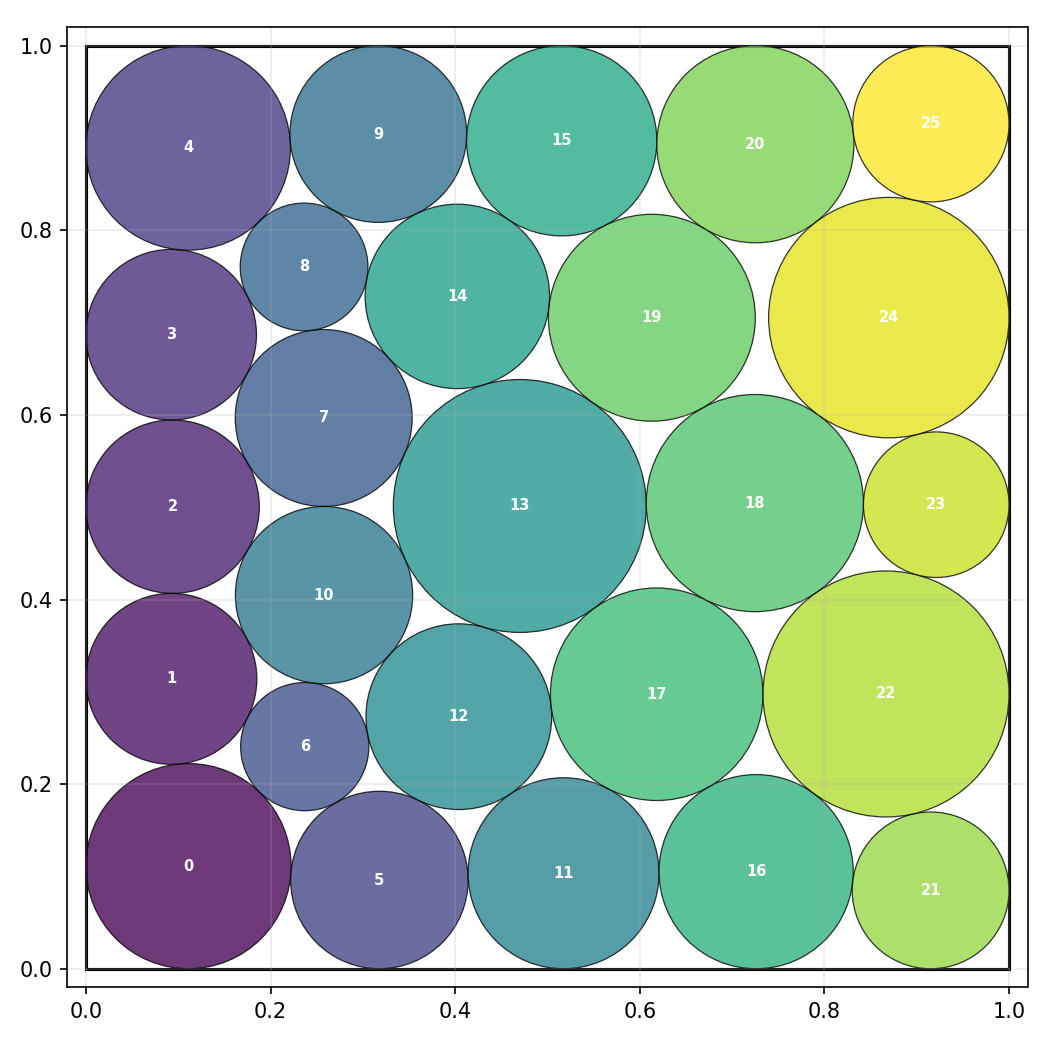}
\caption{A circle packing configuration found by REVES on the $n=26$ instance, achieving sum-of-radii $=2.635983$.}
\label{fig:circle-packing-solution}
\end{figure}

\begin{codebox}{Circle-Packing Program}
import numpy as np
from scipy.optimize import minimize


def run_packing():
    num_circles = 26
    r_initial = 0.08
    x_positions = np.linspace(r_initial, 1 - r_initial, 6)
    y_positions = np.linspace(r_initial, 1 - r_initial, 5)
    positions = [(x, y) for x in x_positions for y in y_positions][:25]
    positions.append((0.3, 0.3))
    positions = np.unique(positions, axis=0)
    if len(positions) < 26:
        positions = np.append(positions, [(0.5, 0.5)] * (26 - len(positions)), axis=0)

    radii = np.full(num_circles, r_initial)

    variables = np.zeros(num_circles * 3)
    for i in range(num_circles):
        variables[2 * i] = positions[i][0]
        variables[2 * i + 1] = positions[i][1]
    for i in range(num_circles):
        variables[2 * num_circles + i] = radii[i]

    def objective(vars):
        return -np.sum(vars[num_circles * 2:])

    constraints = []
    for i in range(num_circles):
        def con1(vars, i=i):
            centers = vars[:num_circles * 2].reshape(num_circles, 2)
            r = vars[2 * num_circles + i]
            return centers[i, 0] - r
        constraints.append({'type': 'ineq', 'fun': con1})

        def con2(vars, i=i):
            centers = vars[:num_circles * 2].reshape(num_circles, 2)
            r = vars[2 * num_circles + i]
            return 1 - centers[i, 0] - r
        constraints.append({'type': 'ineq', 'fun': con2})

        def con3(vars, i=i):
            centers = vars[:num_circles * 2].reshape(num_circles, 2)
            r = vars[2 * num_circles + i]
            return centers[i, 1] - r
        constraints.append({'type': 'ineq', 'fun': con3})

        def con4(vars, i=i):
            centers = vars[:num_circles * 2].reshape(num_circles, 2)
            r = vars[2 * num_circles + i]
            return 1 - centers[i, 1] - r
        constraints.append({'type': 'ineq', 'fun': con4})

    for i in range(num_circles):
        def con5(vars, i=i):
            return vars[2 * num_circles + i]
        constraints.append({'type': 'ineq', 'fun': con5})

    for i in range(num_circles):
        for j in range(i + 1, num_circles):
            def con6(vars, i=i, j=j):
                centers = vars[:num_circles * 2].reshape(num_circles, 2)
                r_i = vars[2 * num_circles + i]
                r_j = vars[2 * num_circles + j]
                dx = centers[i, 0] - centers[j, 0]
                dy = centers[i, 1] - centers[j, 1]
                return dx**2 + dy**2 - (r_i + r_j)**2
            constraints.append({'type': 'ineq', 'fun': con6})

    result = minimize(
        objective,
        variables,
        constraints=constraints,
        method='SLSQP',
        options={'maxiter': 2000000, 'ftol': 1e-10, 'disp': False}
    )

    centers = result.x[:num_circles * 2].reshape(num_circles, 2)
    radii = result.x[num_circles * 2:]
    return centers, radii, float(np.sum(radii))
\end{codebox}
\captionof{figure}{Python program output by the REVES-trained Qwen3-4B policy that produces the configuration in Figure~\ref{fig:circle-packing-solution}.}
\label{lst:circle-packing-code}

\begin{promptbox}{Prompt Template for Code Revision}

\textbf{Problem} \\[0.3em]
\textcolor{brown}{
You will be given a problem specification and are asked to generate a correct Python
program that satisfies the requirements and passes all tests.
The task involves extracting two integers and an operator (``gains'' or ``loses'')
from a natural language string and returning the result of the corresponding calculation.
\dots
}

\vspace{0.8em}
\textbf{Previous Answer} \\[0.3em]
\textcolor{brown}{To address the issue in the provided code, there are a few key points ...}
\begin{tcolorbox}[
  colback=brown!8!white,
  colframe=brown!60!black,
  boxrule=0.2mm,
  arc=2mm,
  left=1mm,right=1mm,top=1mm,bottom=1mm
]
\begin{verbatim}
class Solution:
    def calculate(self, string: str) -> int:
        words = string.split()
        ...
        joined_string = "".join(words)
        result = eval(joined_string)
        return result
\end{verbatim}
\end{tcolorbox}
\textcolor{brown}{\dots}

\vspace{0.8em}
\textbf{Verifiable Feedback:} \\
The previous answer is incorrect.

\vspace{0.8em}
\textbf{Instruction:} \\
Please repair the code accordingly.
Enclose your fixed implementation within the specified Python code delimiters.
Do not assume access to the correct implementation.

\end{promptbox}
\captionof{figure}{Prompt template for the code revision stage with binary correctness feedback. Partial incorrect code is shown for illustration, with remaining content omitted for brevity.}
\label{fig:prompt_code_revision_binary}

\bigskip

\begin{promptbox}{Prompt Template for Code Verification}

\vspace{0.8em}
\textbf{Problem} \\[0.3em]
\textcolor{brown}{
You will be given a problem specification and are asked to generate a correct Python
program that satisfies the requirements and passes all tests.
The task involves extracting two integers and an operator (``gains'' or ``loses'')
from a natural language string and returning the result of the corresponding calculation.
\dots
}

\vspace{0.8em}
\textbf{Candidate Answer:} \\[0.3em]
\textcolor{brown}{To address the issue in the provided code, there are a few key points ...}
\begin{tcolorbox}[
  colback=brown!8!white,
  colframe=brown!60!black,
  boxrule=0.2mm,
  arc=2mm,
  left=1mm,right=1mm,top=1mm,bottom=1mm
]
\begin{verbatim}
class Solution:
    def calculate(self, string: str) -> int:
        words = string.split()
        ...
        joined_string = "".join(words)
        result = eval(joined_string)
        return result
\end{verbatim}
\end{tcolorbox}
\textcolor{brown}{\dots}

\vspace{0.8em}
\textbf{Instruction:} \\[0.3em]
Please verify whether the candidate answer is correct.  
At the end, output the final line exactly as:
\texttt{VERDICT: \textbackslash boxed\{True/False\}}

\end{promptbox}
\captionof{figure}{Prompt template for the code verification stage. The verifier evaluates a candidate solution and outputs a binary correctness verdict.}
\label{fig:prompt_code_verification}

\end{document}